\documentclass[11pt]{article}

\usepackage[margin=1in]{geometry}
\usepackage{parskip}

\usepackage{amsmath}
\usepackage{amsfonts}
\usepackage{amsthm}
\usepackage{mathtools}
\usepackage{bm}
\usepackage{nicefrac}

\usepackage[utf8]{inputenc}
\usepackage[T1]{fontenc}

\usepackage{newtxtext}
\usepackage{newtxmath}

\usepackage{graphicx}
\usepackage{subfigure}
\usepackage{booktabs}
\usepackage{multirow}
\usepackage{stackengine}

\usepackage{natbib}
\usepackage{url}
\usepackage{xcolor}
\usepackage{microtype}
\usepackage{comment}
\usepackage{lipsum}

\usepackage[
    colorlinks=true,
    linkcolor=blue,
    citecolor=blue,
    urlcolor=blue
]{hyperref}

\usepackage[nameinlink,noabbrev]{cleveref}

\newtheorem{lemma}{Lemma}
\newtheorem{proposition}{Proposition}

\newtheorem{corollary}{Corollary}

\providecommand{\mypara}[1]{\smallskip\noindent\emph{#1} }
\providecommand{\myparab}[1]{\smallskip\noindent\textbf{#1} }

\newcommand{\mX}{\mathbf{X}}

\newcommand{\mz}{\mathbf{z}}

\newcommand{\calL}{\mathcal{L}}

\newcommand{\transpose}{\mathrm{T}}
\newcommand{\trace}{\mathrm{tr}}

\newcommand{\reals}{\mathbf{R}}

\newcommand{\rank}{\mathrm{rank}}
\newcommand{\diag}{\mathrm{diag}}

\title{On the Regularization Landscape for the Linear Recommendation Models}

\author{
\begin{tabular}{c}
Dong Li\textsuperscript{1} \quad
Zhenming Liu\textsuperscript{2} \quad
Ruoming Jin\textsuperscript{1} \quad
Hao Zhou\textsuperscript{1} \quad
Zhi Liu\textsuperscript{3} \quad
Jing Gao\textsuperscript{3} \quad
Bin Ren\textsuperscript{2}
\\[0.5em]
\textsuperscript{1} Kent State University, USA
\\
\textsuperscript{2} College of William and Mary, USA
\\
\textsuperscript{3} iLambda, USA
\\[0.3em]
\texttt{\{dli12,rjin1,hzhou6\}@kent.edu}
\\
\texttt{\{zliu,jgao\}@ilambda.com}
\\
\texttt{\{zliu,bren\}@cs.wm.edu}
\end{tabular}
}

\date{}

\begin{document}

\maketitle

\begin{abstract}
Recently, a wide range of recommendation algorithms inspired by deep learning techniques have emerged as the performance leaders on several standard recommendation benchmarks. While these algorithms were built on different DL techniques (e.g., dropouts, autoencoder), they have similar performance and even similar cost functions. This paper studies whether the models' comparable performance are sheer coincidence, or they can be unified under a single framework. We find that all linear performance leaders effectively add only a nuclear-norm based regularizer, or a Frobenius-norm based regularizer. The former ones possess a (surprising) rigid structure that limits the models' predictive power but their solutions are low rank and have closed form. The latter ones are more expressive and more efficient for recommendation but their solutions are either full-rank or require executing hard-to-tune numeric procedures such as ADMM. Along this line of finding, we further propose two low-rank, closed-form solutions, derived from carefully generalizing Frobenius-norm based regularizers. The new solutions get the best of both nuclear-norm and Frobenius-norm world.

\end{abstract}

\vspace{-.2cm}
\section{Introduction}

Research progress on algorithms for recommendation has  escalated in recent years, partially fueled by the adoption of deep learning techniques. However, recent studies have found that many new deep learning recommendation models have shown sub-par performance against the simpler linear recommendation models~\citep{RecSys19Evaluation,rendle2019difficulty}. 
Although some studies are available to analyze linear vs non-linear models~\citep{RecSys19Evaluation}, it remains puzzling why these seemingly different techniques all result in models with similar performance or even similar cost functions. This motivates us to ask a fundamental question: 

 \emph{What is the barebones engine that drives the performance improvement for the recent recommendation algorithms?} 

Specifically, do different recommendation techniques offer different ``magic'', but coincidentally have similar performance, or can they be unified under a single framework, which has a potential to produce \emph{one single algorithm} that gets \emph{the best of all existing works}? 
In the latest study, \cite{JinKDD2021} examines the relationship between the widely used matrix factorization (MF), such as ALS~\citep{warlop2017parallel}, and the linear autoencoders (LAE) which encompasses the recent performance leaders, such as EASE~\citep{Steck_2019} and EDLAE~\citep{DBLP:conf/nips/Steck20}. They consider two basic regularization forms (See Eq (1) and (6)) and found that the optimal (closed-form) solutions of both models recover the  direction of principal components, while shrinking the corresponding singular values differently. They suggests this difference may enable LAE to utilize a larger number of latent dimensions to improve recommendation accuracy, and use this to highlight the similarity as well as difference between LAE and MF. 

In this paper, we go much beyond the two basic models studied in 
~\cite{JinKDD2021} to analyze a large number of recent performance leaders of (linear) recommender algorithms. We found they all can be categorized into those that implement \emph{nuclear-norm} regularizers, and into those that implement \emph{Frobenius-norm} regularizers. We found that the former ones possess a (surprising) rigid structure that limits the models’ predictive power, and the latter ones tend to be more expressive and more effective for recommendation. In many cases, both regularizers recover the  direction of principal components, while shrinking the corresponding singular values differently. Interestingly, we observe that it is not matrix factorization or LAE that determines the shrinkage structure (as ~\cite{JinKDD2021} suggested), but instead it is the forms of regularization.  
Thus, this paper provides a more complete and accurate characterization on how a linear recommendation model performs under different regularizations.

To better understand the regularizations that can be transformed into the (weighted) nuclear-norm regularizer $\|W\|_*$,
we first show that Variational Linear AutoEncoders (VLAE) solves a weighted nuclear-norm regularization problem, in which the weights possess a specific combinatorial structure. 
We also show that this technique cannot be generalized to tackle arbitrary weight sequences. Second, it has been known that using dropout techniques is equivalent to adding a squared nuclear-norm regularizer $\|W\|^2_*$, and that the solution structure is strikingly similar to the regularizer that uses $\|W\|_*$. we generalize the result to show that the solution structures for $\|W\|^p_*$ are highly similar for all $p \geq 1$. But when $p = 1, 2$, the solution and hyper-parameters possess favorable properties so hyper-parameter search becomes easier. 
This also partially explains why only $p = 1, 2$ have been extensively considered. Third, all nuclear-norm–based techniques possess a salient property: their estimators keep the singular vectors of the data matrix and shrink only the singular values. But this severely limits the search space and explains why models that use only nuclear-norm–based regularizers share the same performance ceiling even when hyper-parameters are extensively searched. 

The (weighted) Frobenius-norm regularizers $\|\Lambda W\|^2_F$ are implemented in EASE~\citep{Steck_2019} and  EDLAE~\citep{DBLP:conf/nips/Steck20}. These models produce closed form full-rank estimators; and if the zero diagonal constraint on $W$ is enforced,  their singular vectors will no longer coincide with those of the data, and can deliver (slightly) better performance. However, no closed form solutions for the low-rank estimator is known and the current approaches rely on ADMM or stochastic factorized gradients~\citep{DBLP:conf/nips/Steck20}. 
In this paper, we propose two new low-rank, closed-form estimators that deliver comparable results to the full rank models (EASE and full-rank EDLAE) as well as the ADMM based solutions~\citep{DBLP:conf/nips/Steck20}. 

The new closed-form solutions for low-rank models have profound implications at both practical and conceptual fronts:  First, low rank solutions are often more scalable (the full rank $W$ can be too large to materialize) and can better serve real-world training and deployment. Second, perhaps more excitingly, these solutions get the best of nuclear (closed form and low rank) and Frobenius worlds (strong predictive power). A simple ``one-liner'' formula (from each solution) concisely pack all the benefits obtained by a recent long line of research and abstract out all the computation nuance (e.g., the need to tune ADMM and deal with local optimal). We believe these closed-form solutions are also powerful tools to help us compare different regularizations analytically.

\section{Background and overview}\label{sec:background}

\myparab{Recommendation system background.} Recommendation algorithms can be categorized into explicit ones that aim to predict unseen ratings between a user and an item and implicit ones that aim to predict actions, such
as user click or add-cart~\citep{Steck_2019,Dacrema_2019,zhang2019deep}. We focus on the implicit problem because it is more economically relevant. Here, let {$n$ be the number of items and $m$ be the number of users.} We are given a binary matrix $X \in \{0, 1\}^{m \times n}$ that represents the interaction between users and items so far, i.e., $X_{i,j} = 1$ iff {user $i$ has purchased or made a rating on item $j$.} Our goal is to produce a real-valued matrix $\hat X$, which we evaluate against future interactions using information retrieval metrics such as Top-$k$ or nDCG. Note that while the syntax of this problem resembles matrix completion (MC)~\citep{candes2010power}, recommendation systems and MC have different evaluation criteria so  MC's results are not directly applicable here. 

\subsection{Nuclear-norm based regularizations} 
Let $X \in \reals^{m \times n}$ be a matrix of rank at most $k$ with $k$ leading singular values being $\sigma_1(X) \geq \sigma_2(X) \geq \dots \geq \sigma_k(X)$. Let $\omega = (\omega_1, \dots, \omega_k) \in (\reals^+)^{k}$. The weighted unclear norm of $X$ with respect to $\omega$ is defined as $\|X\|_{\omega, *} = \sum_{i = 1}^k\omega_i \times \sigma_i(X)$.

We can see that this is a natural generalization of the weighted nuclear norm for low-rank matrices~\citep{gu2014weighted}. 
Also, despite its name, the weighted nuclear norm is neither convex nor differentiable unless $\omega_i$'s are sorted in \emph{descending} order~\citep{chen2013reduced,iglesias2020accurate}.

Nuclear-norm based regularizers perform $\ell_1$-shrinkage over the estimator's singular values, which resembles performing $\ell_1$-shrinkage for coefficients in a linear model in LASSO~\citep{tibshirani1996regression}. Therefore, Nuclear-norm regularizers also promote sparsity over the solution's singular values (i.e., the solution is usually low rank). We note a large fraction of recent recommendation algorithms effectively add only a nuclear-norm regularizer to MF (see also Table~\ref{tab:summary} in Appendix):  

\mypara{A1. Regularized PCA}~\citep{generalizedlowrank2016,zheng2018regularized} aims to solve
\begin{equation}
\label{eq:regulizedpca}
\min_{P, Q} \|X - PQ^T\|^2_F + \lambda\|Q\|^2_F + \lambda \|P\|^2_F.
\end{equation}
It has been known that \cref{eq:regulizedpca} is equivalent to solving $\min_{\hat X}\|X - \hat X\|^2_F + 2\lambda \|\hat X\|_*$. To solve \cref{eq:regulizedpca}, one can use factored gradient descent~\citep{bhojanapalli2016dropping} or directly use its closed form solution~\citep{pmlr-v97-kunin19a}, which involves computation of SVD of $X$. 

\mypara{A2. Matrix Factorization via dropouts.} This approach use $PQ^{\transpose}$ ($P \in \reals^{m \times k}$, $Q \in \reals^{n \times k}$) to approximate $X$ and uses a neural net to find $P$ and $Q$. A standard dropout technique is used when we train $P$ and $Q$. \citet{dropMF} shows that optimization with dropout is equivalent to solving $\min_{\hat X} \|X - \hat{X}\|^2_F + \lambda \|\hat{X}\|^2_{*}$, and the closed form solution is obtained by shrinking all singular values of $X$ by a magnitude of $\mu$, which depends on the data $X$ and choise of $\lambda$.

The next two approaches use techniques from (variational) auto-encoder. We show that they also effectively add variants of nuclear-norm regularizers although this may not be clear at the first glance.

\mypara{A3. Linear Regression via Denoising Linear Auto-Encoder} considers the following (non-uniform) weighted $\ell_2$-regularization~\citep{DBLP:conf/nips/BaoLSG20}: 
\begin{equation}
\|X- X W_1 W_2\|^2_F + \|W_1\Lambda^{\frac 1 2}\|^2_F+ \|\Lambda^{\frac 1 2} W_2\|_F^2,   
\label{eq:nonuniform}
\end{equation} 
where $\Lambda$ is a diagonal matrix. 
 \cite{DBLP:conf/nips/BaoLSG20} has shown a closed-form for eq.~\ref{eq:nonuniform} with a specific of diagonal $\Lambda=diag(\lambda_1, \lambda_2, \cdots, \lambda_k)$ when the weight is non-descending: $\lambda_1 \leq \lambda_2 \leq \cdots \leq \lambda_k$. It remains unclear if a closed-form exists for an arbitrary weight order. 

\mypara{A4. Variational Linear Auto-encoders.} While not explicitly studied before, it is also natural to consider linear simplification of  Variational Autoencoders, such as Multi-VAE ~\citep{liang2018variational}, which has shown to exhibit strong performance for recommendation. To optimize linear VAE, we need to find the MLE for the probabolistic model 
\vspace{-.1cm}
\begin{equation}
    p(\mathbf{x} \mid \mathbf{z}) =\mathcal{N}\left(W \mathbf{z}+\boldsymbol{\mu}, \sigma^{2} I\right) {\mbox{ and }}
p(\mathbf{z} \mid \mathbf{x})=\mathcal{N}(V(\mathbf{x}-\boldsymbol{\mu}), D),
\label{eq:lave-simple}
\end{equation}
where $D$ is a diagonal covariance matrix. 
Then we have the following observation.

\begin{lemma}\label{lem:nuclear}
Consider optimizing the ELBO (Evidence Lower Bound)~\citep{Kingma2014} for the above LVAE model (Eq.\ref{eq:lave-simple}). 
When the optimization is over the entire dataset and $\boldsymbol{\mu} = 0$, this optimization problem is equivalent to minimizing  
\begin{equation}\label{eq:lvae-james-2}
    \begin{split}
        \mathcal{L} &= \|X-X V^{\transpose}W^{\transpose}\|_F^2 + N\|\sqrt{D}W^T\|_F^2 + \sigma^2||X V^{\transpose}\|^2_F + g(D,\sigma)
      \end{split}  
\end{equation}
where, $g(D,\sigma)= -\sigma^2N\big(\log |D|- \trace(D) + k - n\log 2\pi\sigma^2)$.
\end{lemma}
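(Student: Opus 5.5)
We write the ELBO for a single data point and sum over the $N$ rows of $X$, treating each row $\mathbf{x}_i^{\transpose}$ as a sample. For one sample, with $\boldsymbol{\mu}=0$,
\[
\mathrm{ELBO}(\mathbf{x}) = \mathbb{E}_{q(\mathbf{z}\mid\mathbf{x})}\big[\log p(\mathbf{x}\mid\mathbf{z})\big] - \mathrm{KL}\big(q(\mathbf{z}\mid\mathbf{x})\,\|\,p(\mathbf{z})\big),
\]
with $q(\mathbf{z}\mid\mathbf{x})=\mathcal{N}(V\mathbf{x},D)$, standard prior $p(\mathbf{z})=\mathcal{N}(0,I_k)$, and $\mathbf{z}\in\reals^k$, $\mathbf{x}\in\reals^n$. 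Both terms are Gaussian, so everything is available in closed form. The plan is to compute each term, sum over samples, multiply by $-2\sigma^2$, and match the result to $\mathcal{L}$ term by term.

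\emph{Reconstruction term.} We have
\[
\log p(\mathbf{x}\mid\mathbf{z}) = -\tfrac{1}{2\sigma^2}\|\mathbf{x}-W\mathbf{z}\|^2 - \tfrac n2\log 2\pi\sigma^2 .
\]
Write $\mathbf{z}=V\mathbf{x}+D^{1/2}\boldsymbol\epsilon$ with $\boldsymbol\epsilon\sim\mathcal N(0,I)$. The bias--variance identity then gives
\[
\mathbb{E}\|\mathbf{x}-W\mathbf{z}\|^2=\|\mathbf{x}-WV\mathbf{x}\|^2+\trace(WDW^{\transpose}),
\]
and $\trace(WDW^{\transpose})=\|\sqrt D W^{\transpose}\|_F^2$. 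Summing over the $N$ rows, the first part becomes $\|X-XV^{\transpose}W^{\transpose}\|_F^2$. The variance part does not depend on $\mathbf{x}$, so it contributes $N\|\sqrt D W^{\transpose}\|_F^2$, and the constant contributes $N\tfrac n2\log 2\pi\sigma^2$.

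\emph{KL term.} The standard Gaussian KL formula gives
\[
\mathrm{KL}=\tfrac12\big(\trace(D)+\|V\mathbf{x}\|^2-k-\log|D|\big).
\]
Summing over rows turns $\sum_i\|V\mathbf{x}_i\|^2$ into $\|XV^{\transpose}\|_F^2$. After multiplying the negative ELBO by $2\sigma^2$, this produces the term $\sigma^2\|XV^{\transpose}\|_F^2$ and the $D$-dependent constants $\sigma^2 N(\trace(D)-k-\log|D|)$. Combined with $N n\sigma^2\log 2\pi\sigma^2$ from the reconstruction term, these form $g(D,\sigma)$.

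Maximizing the ELBO is equivalent to minimizing $-2\sigma^2\cdot\mathrm{ELBO}$, since $\sigma^2>0$ is a positive scaling for fixed $\sigma$. I expect the only delicate step to be this bookkeeping. The term $\sigma^2 n N\log 2\pi\sigma^2$ carries a sign that must be tracked relative to the paper's stated $g$, and one must check that the stated $g$ agrees with the computation up to sign conventions and additive constants, which do not affect the minimizer in $(W,V)$. When $\sigma$ is also optimized, the multiplication by $\sigma^2$ has to be justified, for instance by treating $\sigma$ as fixed or noting that the equivalence is stated for the objective in $(W,V,D)$. Everything else is the Gaussian expectation and KL computations above.
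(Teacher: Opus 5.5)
Your proposal is correct and follows the same route as the paper: compute the Gaussian KL and the expected log-likelihood in closed form per sample, sum over the $N$ rows, and scale the negative ELBO by $2\sigma^2$, treating $\sigma$ as fixed (the paper does this implicitly). Your constants coincide exactly with the stated $g(D,\sigma)$, so the sign caveat is unnecessary, and your bias--variance step is a tidier version of the paper's expansion of the quadratic.
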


Here, we set $\boldsymbol{\mu} = 0$ for simplicity and following typical practices in recommendations. The proof can be found in Appendix.
Since our objective is for recommendation (not purely on recovering the low rank factors of the data), we treat the covariance matrix $D$ as hyperparameters. 
Thus, the term $g(D,\sigma)$ becomes a constant, and let  $A=\sigma V^T$, $B=1/\sigma W^T$,  $\Lambda=\sigma\sqrt{N D}$. 

If we consider $D$ as an optimization parameter, then the optimal solution of LVAE is equivalent to that of pPCA ~\citep{Tipping99probabilisticprincipal}. In this case, $D=\sigma^2 (\boldsymbol{\Sigma}^2/N)^{-1}$, where $\boldsymbol{\Sigma}^2=diag(\sigma^2_i)$ are the eigen-values of the covariance matrix of $X$, and $\sigma^2 = \frac{1}{n-k}\sum_{j=k+1}^n{\frac{\sigma^2_j}{N}}$. Further, the closed form solutions of $V$ ($A$) and $W$ ($B$) are characterized. However, for recommendation, the matrix $D$ can be considered as a hyperparameter (to be learned); in this case, the closed form solution is not studied yet. We may ``clean up'' \cref{eq:lvae-james-2} and obtain the following optimization problem:

\begin{equation}
    \min_{A \in \reals^{n \times k}, B \in \reals^{k \times m}} \|X-X AB\|_F^2 + \|XA\|^2_F + \|\Lambda B\|_F^2,
\label{eqn:optclosedform}
\end{equation}
in which decision variables are $A$ and $B$, and the hyper-parameter is a diagonal matrix $\Lambda \in \reals^{k \times k}$.

\myparab{Our main results: solution structure and implications} (Sec.~\ref{section:problem}). \emph{(i)} We show that solving Eq.~\ref{eqn:optclosedform} (A4) is equivalent to solving $\|X - W\|^2_F + \lambda \|W\|_{\omega, *}$ subject to $\rank(W) \leq k$, where $\omega$ consists of $\Lambda$'s diagonal values, \emph{sorted in ascending order.} In addition, the closed form solution for $W$ is merely shrinking the $i$-th singular value of $X$ by a magnitude of $\omega_i$. When the diagonals of $\Lambda$ is already sorted (i.e., $\Lambda_{1, 1} \leq \dots \leq \Lambda_{k, k}$), the problem effectively reduces to A3. When $\Lambda$ is proportional to identity, the problem reduces to A1. This result has three major implications. First, A1, A3, and A4 effectively only add a variant of nuclear-norm based regularizer, and the major benefits from these algorithms are computational. Second, our result generalizes that in A3 and solves an open problem left there, i.e., the hyper-parameters $\Lambda$ do not need to have sorted diagonal values, the optimization algorithm will ``automatically perform the sorting''. Third, while the variational auto-encoder offers a flexibility to tailor-make the prior for each entry in the latent variable $\mz$ (in eq.~\ref{eq:lave-simple}), the solution space is quite rigid due to the auto-sorting property: the $i$-th smallest entry in $\Lambda$ will find its way to match with the $i$-th largest singular values in $X$. In other words, it is impossible to shrink $X$'s singular values by an arbitrary sequence via carefully choosing $\Lambda$ in eq.~\ref{eqn:optclosedform}. 

\emph{(ii)} We characterize the optimal solution for $\|X - W\|^2_F + \lambda \|W\|^p_*$ for any $p \geq 1$. We shall show that regardless the choice of $p$, the optimal solution uniformly shrinks all $X$'s singular values by a constant magnitude $\mu$ (and to $0$ if a singular value is already less than $\mu$). The specific $\mu$ depends on $\lambda$, $p$, as well as the data $X$ unless $p = 1$. It has two implications. First, $\mu$ needs to be fine-tuned to optimize test performance. Therefore, choices of $p$ (again) only produces computational gain. Second, when $p = 1$, $\mu$ does not depend on the data so we can tune this hyper-parameter in a direct manner. When $p = 2$, $\lambda$ is scale-invariant, i.e., it does not need to be rescaled when all entries in $X$ is scaled by a constant factor. Being able to directly tune $\mu$ or having the scale invariant property helps the hyper-parameter search; when $p \neq 1, 2$, it does not offer benefit in either computation or search, which explains why we see only $p = 1, 2$ in the literature.  

Finally, for all nuclear-norm-based approaches discussed above, the estimators always keep singular vectors of $X$ and shrink its singular values. Therefore, the solution space offered by nuclear-norm based regularization is quite constrained, which limits these models predictive power.

\subsection{Frobenius norm based regularizations}
Most algorithms below were originally motivated by the design of (denoising) auto-encoders, it has been shown that they effectively add a Frobenius-norm regularizer. See also Table~\ref{tab:summary} in Appendix. 

\mypara{A5. EASE~\citep{Steck_2019}} aims to optimize $\min_W{||X-XW||^2_F+\lambda\cdot ||W||^2_F}$ subject to the constraint that $\diag(W) = 0$. A closed form solution exists for this problem. 

\mypara{A6. DLAE~\citep{DBLP:conf/nips/Steck20}} adds a weighted Frobenius-norm regularizer so the objective becomes \[
\min_W
\left\{
\lVert X-XW\rVert_F^2
+
\lVert \Lambda^{1/2} W\rVert_F^2
\right\},
\]where $\Lambda =\frac{p}{1-p}diagM(diag(X^TX))$. 

\mypara{A7. EDLAE~\citep{DBLP:conf/nips/Steck20}} integrates weighted Frobenius norm in DLAE with EASE's diagonal constraint so its objective is the same as DLAE but it requires $\diag(W) = 0$. A closed form solution exists for this problem. When $W$ is required to be low rank, an ADMM algorithm may be used. 

\mypara{A8. Tikhonov regularization/Low Rank Regression (LRR) ~\citep{JinKDD2021}.} Let $V_k$ be the $k$ leading right singular vectors of $X$. \cite{JinKDD2021} finds an estimator that solves 
\begin{equation}
\label{eq:lowrank}
    W=\arg \min_{rank(W)\leq k} \|X -XW\|_F^2+ 
    \|\mathbf{\Gamma} W\|_F^2,
\end{equation}
where $\Gamma=\Lambda^{\frac 1 2} V_k^T $ and $\Lambda=diag(\lambda_1^\prime, \cdots, \lambda_k^\prime)$ is a hyper-parameter. 
Its a closed-form solution is. 
\begin{equation}
\label{eq:regressionlambda}
\begin{split}
W^* = V_k  diag(\frac{\sigma^2_1}{\sigma^2_1+\lambda^\prime_1},\dots,\frac{\sigma^2_k}{\sigma^2_k+\lambda^\prime_k})  V_k^T
\end{split}
\end{equation}

\myparab{Our results.} Our major goal is to design a low-rank closed form estimator whose performance is comparable to the performance leaders. We first remark that A5 and A6 produce full-rank estimators. A7 can produce either full-rank or low-rank estimator (via ADMM) and has the best performance (among all approaches we discussed). The estimator from A8 is low-rank and has a closed-form solution but it has to keep singular vectors of $X$ so its predictive power is also limited. Nevertheless, A8 is conceptually interesting because it uses Frobenius norm regularizers but its solution space cover the solution space offered in A4 (and thus also A1-A4). 

\begin{proposition}
\label{Tikhonovpower}
For any regularized instances in the form of (\ref{eqn:optclosedform}) with regularization parameter $\Lambda$ such that $\sigma_i(X) \geq \lambda_{(k - i)}$ for all $i$, there is a corresponding Tikhonov regularized instance with $\mathbf{\Gamma}=\Lambda^{\frac 1 2} V_k^{\transpose}$ which provides the same regularization effect. 
\end{proposition}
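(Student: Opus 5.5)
We prove the statement by comparing closed forms. The plan is to put both the solution of (\ref{eqn:optclosedform}) and the Tikhonov solution (\ref{eq:regressionlambda}) in the form ``right singular vectors of $X$ times a diagonal shrinkage'', and then match the shrinkage factors entry by entry by choosing the Tikhonov weights $\lambda'_i$ appropriately.

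\textbf{Step 1: the (\ref{eqn:optclosedform}) side.} We invoke the main result announced for Sec.~\ref{section:problem}, item (i). It says the optimal reconstruction $XAB$ of (\ref{eqn:optclosedform}) solves $\|X-W\|_F^2+\lambda\|W\|_{\omega,*}$ subject to $\rank(W)\le k$. Here $\omega$ is the diagonal of $\Lambda$ sorted in ascending order, so the $i$-th largest singular value $\sigma_i$ of $X$ is paired with $\lambda_{(i)}$, the $i$-th smallest diagonal entry. This is the auto-sorting property. The solution keeps the singular vectors of $X=U\Sigma V^\transpose$ and replaces $\sigma_i$ by $\sigma_i-\lambda_{(i)}$, truncated at zero. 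We read the hypothesis of the proposition as the condition that no truncation occurs, i.e.\ $\sigma_i\ge\lambda_{(i)}$ with the paper's indexing convention. Under it,
\[
XAB = U_k\,\diag(\sigma_i-\lambda_{(i)})\,V_k^\transpose = X\,V_k\,\diag\!\left(1-\tfrac{\lambda_{(i)}}{\sigma_i}\right)V_k^\transpose .
\]
So the effective item-item map $AB$, restricted to the row space of $X$, is $V_k\,\diag(1-\lambda_{(i)}/\sigma_i)\,V_k^\transpose$.

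\textbf{Step 2: match the Tikhonov solution.} By (\ref{eq:regressionlambda}), the Tikhonov estimator is $W^*=V_k\,\diag\bigl(\sigma_i^2/(\sigma_i^2+\lambda'_i)\bigr)V_k^\transpose$. We need
\[
\frac{\sigma_i^2}{\sigma_i^2+\lambda'_i}=1-\frac{\lambda_{(i)}}{\sigma_i},
\qquad\text{i.e.}\qquad
\lambda'_i=\frac{\sigma_i^2\,\lambda_{(i)}}{\sigma_i-\lambda_{(i)}} .
\]
The hypothesis $\sigma_i\ge\lambda_{(i)}$ makes this weight nonnegative. In the boundary case of equality, $\lambda'_i=+\infty$, which amounts to dropping that direction, or equivalently reducing the rank. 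Taking $\Gamma=\diag(\lambda'_i)^{1/2}V_k^\transpose$ then gives $XW^*=XAB$, and the two instances have the same regularization effect. The statement's ``$\Lambda$'' in $\Gamma$ should be read as this reparametrized diagonal.

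\textbf{Main obstacle.} The delicate part is Step 1. We must make sure the solution of (\ref{eqn:optclosedform}) really is $V_k$-aligned with the sorted shrinkage, including the regime where $\Lambda$ is unsorted, and that the product $AB$ itself (not only $XAB$) can be taken to equal $V_k\,\diag(\cdot)V_k^\transpose$. Only the action on the row space of $X$ matters for the objective and for predictions $XAB$, so any component of $AB$ orthogonal to that row space can be set to zero without changing anything. We also need to handle ties and the boundary case $\sigma_i=\lambda_{(i)}$ carefully. Once Step 1 is secured, the remaining matching in Step 2 is a scalar equation solved coordinatewise.
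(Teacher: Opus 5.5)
Your coordinatewise matching is exactly the paper's proof. The paper also equates $\sigma_i^2/(\sigma_i^2+\lambda'_i)$ with $(\sigma_i-\lambda)/\sigma_i$ and gets $\lambda'_i=\sigma_i^3/(\sigma_i-\lambda)-\sigma_i^2$, which is your $\sigma_i^2\lambda/(\sigma_i-\lambda)$, and it discards directions that shrink to zero. Step~1 is also not the delicate part you take it to be: Corollary~\ref{cor:closedform} gives $A^*B^*=X^{\dagger}P^*Q^*=V_k\diag\bigl(1-\lambda_{(i)}/\sigma_i\bigr)V_k^{\transpose}$ directly. Here $\lambda_{(i)}$ is, in your indexing, the $i$-th smallest weight.

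The step that genuinely fails is Step~2, and you inherit it from the paper. You treat (\ref{eq:regressionlambda}) as the minimizer of the rank-constrained problem (\ref{eq:lowrank}). But $\Gamma=\diag(\lambda'_i)^{1/2}V_k^{\transpose}$ leaves the right singular vectors $v_{k+1},\dots,v_n$ of $X$ unpenalized. The true minimizer is therefore $W_{\mathrm{full}}Q_kQ_k^{\transpose}$, where $W_{\mathrm{full}}=(X^{\transpose}X+\Gamma^{\transpose}\Gamma)^{-1}X^{\transpose}X$. Here $Q_k$ holds the top-$k$ eigenvectors of $X^{\transpose}X(X^{\transpose}X+\Gamma^{\transpose}\Gamma)^{-1}X^{\transpose}X$; equivalently, these are the right singular vectors of $\overline{X}W_{\mathrm{full}}$ with $\overline{X}$ stacking $X$ over $\Gamma$, as in the LRR row of Table~\ref{tab:summary}. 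After your substitution, the eigenvalues of that matrix are $\sigma_i^4/(\sigma_i^2+\lambda'_i)=\sigma_i(\sigma_i-\lambda_{(i)})$ for $i\le k$ and $\sigma_j^2$ for $j>k$. So (\ref{eq:regressionlambda}) is optimal only if $\sigma_k(\sigma_k-\lambda_{(k)})\ge\sigma_{k+1}^2$.

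Here is a concrete failure. Take $k=1$, $\sigma(X)=(1,0.9)$ and $\Lambda=\tfrac12$, which satisfies the hypothesis. The instance (\ref{eqn:optclosedform}) gives $AB=\tfrac12 v_1v_1^{\transpose}$, which forces $\lambda'_1=1$. At that weight, $W=v_2v_2^{\transpose}$ has Tikhonov objective $1$, while $\tfrac12 v_1v_1^{\transpose}$ has $1.31$. Hence no instance of the stated form reproduces the solution of (\ref{eqn:optclosedform}). To close the argument, you need either to add this ordering condition as a hypothesis or to let $\Gamma$ also penalize the complement of $V_k$. In either case, verify (\ref{eq:regressionlambda}) under that assumption rather than citing it.
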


The proof is in Appendix. A major implication of Prop.~\ref{Tikhonovpower} is that we can focus on designing Frobenius-norm regularizers because it also gets the value from using nuclear-norm regularizers. Indeed, Sec.~\ref{section:Tikhonov} will introduce two low-rank Frobenius-norm-based model with closed form solutions that have comparable performance to linear performance leaders.

\vspace{-.1cm}
\section{Nuclear-norm based regularization}
\label{section:problem}
\vspace{-.1cm}
\myparab{Rigidity of VLAE.} We first analyze solution for Eq.~\ref{eqn:optclosedform} (A4). To facilitate the analysis, we also consider the following problem:

\begin{equation}
\min_{P, Q} \|X - PQ\|^2_F + \|\Lambda^{\frac 1 2} Q\|^2_F + \|P\ \Lambda^{\frac 1 2}|^2_F, \text{ or equivalently }, 
\min_{P, Q} \|X - PQ\|^2_F + \|\Lambda Q\|^2_F + \|P\|^2_F,
\label{eq:gmf}
\end{equation}

Note that when we let $Q^\prime = X A^*$ and $P^\prime = B^*$ (where $A^*$ and $B^*$ are an optimal solution of  Equation~\ref{eqn:optclosedform}), the syntax of Eq.~\ref{eq:gmf} matches with that of Eq.~\ref{eqn:optclosedform}. This implies that solution for Eq.~\ref{eq:gmf} is a lower bound of that for Eq.~\ref{eqn:optclosedform}. These two solutions coincide only when the columns in the optimal $P^*$ in Eq.~\ref{eq:gmf} are spanned by the columns of $X$. 

We shall first find a closed-form solution $(P^*, Q^*)$ for Eq.~\ref{eq:gmf}, and show that indeed that the column space of $P^*$ is in the column space of $X$. Below is our major Proposition.

\begin{proposition}\label{prop:diagonalcost} Let $f: \reals^{m \times n}\rightarrow \reals^+$ be any cost function. Let $P \in \reals^{m \times k}$ and $Q \in \reals^{k \times n}$. Let $\Lambda \in \reals^{k \times k}$ be a diagonal matrix such that $\lambda_i = \Lambda_{ii} \geq 0$ ($i \in [k]$). Let also $\omega = (\lambda_{\pi(1)}, \lambda_{\pi(2)}, \dots , \lambda_{\pi(k)})$, where $\pi$ is a permutation on $[k]$ such that $\lambda_{\pi(1)}\leq  \lambda_{\pi(2)} \leq \dots \leq \lambda_{\pi(k)}$. The following two optimization problems have the same optimal values
\begin{align*}
\begin{array}{lll}
    OPT1: & \min_{P, Q} & f(PQ) + \|\Lambda^{\frac 1 2}Q\|^2_F + \|P \Lambda^{\frac 1 2}\|^2_F. \\
    \\
    OPT2: & \min_W &  f(W) + 2\|W\|_{\omega, *} \\
    & {\mbox{subject to}} & \rank(W) \leq k. 
\end{array}
\end{align*}
In addition, if $(P^*, Q^*)$ is an optimal solution for $OPT1$, then $W^{*} = P^*Q^*$ is an optimal solution for $OPT2$. If $W^*$ is an optimal solution for $OPT2$, then there exists an optimal solution $(P^*, Q^*)$ for $OPT1$ such that $W^* = P^*Q^*$.
\end{proposition}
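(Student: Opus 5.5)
The plan is to reduce everything to one variational identity that does not involve $f$:
\[
\min_{PQ = W}\; \|\Lambda^{\frac 1 2}Q\|^2_F + \|P \Lambda^{\frac 1 2}\|^2_F \;=\; 2\|W\|_{\omega,*}
\qquad\text{for every } W \text{ with } \rank(W)\le k,
\]
with the minimum attained. Both optimal values and the correspondence between optimal solutions then follow at once. Since $f(PQ)$ depends only on $W = PQ$, we have $\mathrm{OPT1} = \min_{\rank W\le k}\bigl[f(W) + \min_{PQ=W}(\cdots)\bigr] = \mathrm{OPT2}$. If $(P^*,Q^*)$ is optimal for OPT1, then $W^*=P^*Q^*$ attains the value $\mathrm{OPT1} = \mathrm{OPT2}$ and is therefore optimal for OPT2. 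Conversely, if $W^*$ is optimal for OPT2, the factorization attaining the inner minimum is optimal for OPT1.

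\emph{Lower bound.} Write $P=[p_1,\dots,p_k]$ and let $q_i^{\transpose}$ be the rows of $Q$. The objective equals $\sum_i \lambda_i(\|p_i\|^2+\|q_i\|^2)$, and by AM--GM this is at least $2\sum_i \lambda_i\|p_i\|\|q_i\|$. Rescaling $p_i \to t p_i$, $q_i \to q_i/t$ leaves $W$ unchanged and attains equality, so the infimum over factorizations equals $\min_{W=\sum_i p_i q_i^{\transpose}} 2\sum_i \lambda_i \|p_i\|\|q_i\|$. The key inequality to prove is therefore
\[
\sum_{i=1}^k \lambda_i \|p_i\|\|q_i\| \;\ge\; \sum_{i=1}^k \omega_i\sigma_i(W)
\qquad\text{whenever } W=\sum_{i=1}^k p_iq_i^{\transpose}.
\]
I would prove this by relabeling so that $\lambda_1\le\dots\le\lambda_k$ (so $\lambda=\omega$) and applying Abel summation. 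Write $\omega_i = \omega_1 + \sum_{j<i}(\omega_{j+1}-\omega_j)$, with all increments $\ge 0$. Then
\[
\sum_i \omega_i a_i = \omega_1\sum_{i\ge 1}a_i + \sum_{j=1}^{k-1}(\omega_{j+1}-\omega_j)\sum_{i> j} a_i ,
\]
and the same decomposition applies with $a_i$ replaced by $\sigma_i(W)$. It therefore suffices to show that for each $j$,
\[
\sum_{i>j}\|p_i\|\|q_i\| \;\ge\; \sum_{i>j}\sigma_i(W).
\]
The tail matrix $R_j=\sum_{i>j}p_iq_i^{\transpose}$ satisfies $\|R_j\|_* \le \sum_{i>j}\|p_i\|\|q_i\|$ by the triangle inequality for the nuclear norm. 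Moreover, $W - R_j$ has rank at most $j$, so Weyl's inequality $\sigma_{i+j}(W)\le \sigma_{i}(R_j)+\sigma_{j+1}(W-R_j)=\sigma_i(R_j)$ gives $\sum_{i>j}\sigma_i(W)\le \|R_j\|_*$. This Weyl/Abel step is the main obstacle. It is exactly where the ascending sort of $\omega$ matters: with the wrong order the increments become negative and the bound fails, which is the ``auto-sorting'' phenomenon.

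\emph{Attainment.} Take the SVD $W=\sum_{i=1}^k \sigma_i u_i v_i^{\transpose}$ (padding with zeros if $\rank W<k$). Put $p_{\pi(i)} = \sqrt{\sigma_i}\,u_i$ and $q_{\pi(i)}=\sqrt{\sigma_i}\,v_i$, so that the $i$-th largest singular value is paired with the $i$-th smallest $\lambda$. Then $PQ=W$ and the objective equals $\sum_i \lambda_{\pi(i)}\cdot 2\sigma_i = 2\|W\|_{\omega,*}$.

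For existence of minimizers I will assume OPT2 attains its minimum, as the statement presumes. Coercivity in $W$ is not needed, because the correspondence of optimal solutions above only uses that the inner minimum over factorizations is attained, which the construction guarantees.
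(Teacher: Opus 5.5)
Your proof is correct, and it obtains the key lower bound by a genuinely different route from the paper. Your attaining construction (the SVD of $W$, with a permutation pairing the $i$-th largest singular value with the $i$-th smallest $\lambda$) is the same as the paper's direction $OPT1 \le OPT2$.

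For the reverse inequality, the paper decouples the two factors. The trace rearrangement inequality (Lemma~\ref{lem:traceineq}) gives $\|P\Lambda^{\frac 1 2}\|^2_F \ge \sum_i \lambda_{(k+1-i)}\sigma_i^2(P)$, and likewise for $Q$. A separate balancing lemma (Lemma~\ref{lem:lastminpproof}) then asserts that a regularizer-minimizing factorization can be taken with $\tilde P = \tilde Q^{\transpose}$, so that $\sigma_i(P) = \sigma_i(Q) = \sqrt{\sigma_i(PQ)}$. That balancing step is indispensable in the paper's route, because the spectral bound alone is too weak for unbalanced factors. Take $\Lambda = \diag(0,1)$, $P = \diag(2,\tfrac12)$ and $Q = \bigl(\begin{smallmatrix} 0 & 1/2 \\ 2 & 0\end{smallmatrix}\bigr)$. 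The product $PQ$ has singular values $(1,1)$, so $2\|PQ\|_{\omega,*} = 2$, whereas the decoupled bound yields only $\tfrac12$. The balancing lemma is also the least rigorous part of the paper's argument: it rests on a stationarity-plus-rearrangement claim, is stated only for full-rank $k \times k$ factors, and gives only the existence of a balanced optimum.

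Your per-index AM--GM keeps each $\|p_i\|$ coupled to its own $\|q_i\|$ and $\lambda_i$. The Abel decomposition into nonnegative increments, together with the tail bound $\sum_{i>j}\sigma_i(W) \le \|R_j\|_*$, then proves the inequality for every factorization of every $W$ with $\rank(W) \le k$. It needs no existence, full-rank or balancing assumptions, and it shows clearly why only the ascending order of $\omega$ works.

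In return, the paper's route is mainly conceptual. It frames the result as a matrix rearrangement inequality over the unitary freedom $\Omega$ in a balanced factorization $P = U_W\Sigma_W^{\frac 1 2}\Omega$, which is the picture used for Corollary~\ref{cor:closedform}. Your attaining construction produces exactly that factorization, so nothing needed downstream is lost.
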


We reiterate three points made earlier (Sec.~\ref{sec:background}). \emph{(i)} Both A3 and A4 effectively add a nuclear norm regularizer. \emph{(ii)} Diagonals of $\Lambda$ do not need to be sorted in ascending order as stated in~\citep{DBLP:conf/nips/BaoLSG20} because any permutation of the diagonals will be equivalent to $OPT2$. This also limits the search space and affects a model's prediction power. \emph{(iii)} Prop.~\ref{prop:diagonalcost} ``compiles'' a non-differentiable objective ($OPT2$) into an equivalent differentiable one ($OPT1$), which is easier to optimize. In addition, $f(\cdot)$ in A3 \& A4 is  the reconstruction error, in which case closed form solutions exist.

We next explain the intuition for proving 
Prop.~\ref{prop:diagonalcost} (see Appendix for the full analysis). Consider $OPT1$ and 
let $W = PQ$. Our goal is to characterize the behaviors of $P$ and $Q$ with the presence of the regularizers when $W = PQ$ is known (fixed). Let the SVD of $W$ be $U_W \Sigma_W V^{\transpose}_{W}$. Because two regularizers $\|\Lambda^{\frac 1 2}Q\|^2_F$ and $\|P\Lambda^{\frac 1 2}\|^2_F$ are symmetric, we could ``guess'' $P = U_W \Sigma^{\frac 1 2}_W \Omega$ and $Q = \Omega^{\transpose}\Sigma^{\frac 1 2}_W V^{\transpose}_W$, where $\Omega$ is a unitary matrix. Now we have \vspace{-.1cm}
\begin{align*}
\|\Lambda^{\frac 1 2}Q\|^2_F + \|P \Lambda^{\frac 1 2}\|^2_F = \|\Lambda^{\frac 1 2}\Omega \Sigma^{\frac 1 2}V^{\transpose}_W\|^2_F + \|U_W \Sigma^{\frac 1 2}_W \Omega \Lambda^{\frac 1 2}\|^2_F = 2 \|\Lambda^{\frac 1 2}\Omega \Sigma^{\frac 1 2}_W\|^2_F. 
\end{align*}
\vspace{-.1cm}
Now the question of finding $P$ and $Q$ when $W$ is known boils down to finding a unitary matrix $\Omega$ that minimizes $\|\Lambda^{\frac 1 2}\Omega \Sigma^{\frac 1 2}_W\|^2_F$, where diagonal matrices $\Lambda$ and $\Sigma_W$ are given. Recall that $\lambda_i = \Lambda_{ii}$ and let $\sigma_i = (\Sigma_W)_{ii}$. Note that $\lambda_i$'s could be unsorted and, and that $\sigma_i$'s are sorted in descending order. 

If we restrict $\Omega$ to be only a permutation matrix, then we aim to find a permutation $\pi \in [k]$ that minimizes $\sum_{i \leq k}\lambda_{\pi(i)}\sigma_i$. Using a rearrangement inequality~\cite{yue2020matrix}, we can see that the minimal is achieved when $\lambda_{\pi(1)} \leq \lambda_{\pi(2)} \leq \dots \leq \lambda_{\pi(k)}$. In this case, we indeed have $\min_{\Omega \mbox{ \small a permutation}} \|\Lambda^{\frac 1 2}\Omega \Sigma^{\frac 1 2}_W\|^2_F = \|\Sigma_W\|_{\omega, *}$, 
where $\omega = (\lambda_{\pi(1)}, \dots, \lambda_{\pi(k)})$. 

Note that because $PQ = P \Omega \Omega^{\transpose}Q$ for any unitary matrix $\Omega$, it is always beneficial to use $\Omega$ to shuffle the rows and columns of $P$ and $Q$ so that the largest $\sigma_i$ is mapped to the smallest $\lambda_i$, etc. This ``degree of freedom'' from $\Omega$ also explains why ordering the values along $\Lambda$'s diagonal is irrelevant. 

Appendix shows that even when $\Omega$ is allowed to be any unitary matrix, the optimal one is still a permutation matrix. This conclusion can be viewed as a matrix version of re-arrangement inequality. 

Prop.~\ref{prop:diagonalcost} also leads to the following Corollary.  

\begin{corollary}\label{cor:closedform}
Let $\mX \in \reals^{m \times n}$ ($m \geq n$) be a full rank matrix. Let $\Lambda$ be a diagonal matrix with $\Lambda_{ii} \geq 0$ for all $i$. Let $P \in \reals^{m \times k}$ and $Q \in \reals^{k \times n}$. Consider the optimization problems
\begin{align}
    \min_{P, Q}\|X - PQ\|^2_F + \|\Lambda^{\frac 1 2}Q\|^2_F + \|P \Lambda^{\frac 1 2}\|^2_F   \label{eqn:factorlambda} \\
\min_{A, B}\|X - X AB\|^2_F + \|\Lambda B\|^2_F + \|X A\|^2_F. \label{eqn:factorlambdaab}
\end{align}
Let the SVD of $X$ be $U\Sigma V^{\transpose}$, and let $\Sigma_k \in \reals^{k \times k}$ be a matrix comprising the $k$ largest singular values of $X$, and $U_k$ and $V_k$ be the corresponding singular vectors. Let $\lambda_{(1)} \geq \lambda_{(2)} \geq \dots \geq \lambda_{(k)}$ be the sorted sequence of the diagonal values from $\Lambda$. (\ref{eqn:factorlambda}) has a closed-form solution:
{\small
\begin{equation}
\label{eq:mfsolution}
\begin{split}
    P^* & = U_k diag(\sqrt{(\sigma_1 - \lambda_{(k)})^+}), \dots, \sqrt{(\sigma_k - \lambda_{(1)})^+}) \Omega,\\ 
      Q^* & = \Omega^{\transpose}diag(\sqrt{(\sigma_1 - \lambda_{(k)})^+}), \dots) V^{\transpose}_k, 
     \end{split}
\end{equation}}
where $\Omega$ is a unitary matrix that corresponds to the permutation $\pi$ such that $\lambda_{\pi(1)} \leq \dots \leq \lambda_{\pi(k)}$. In addition, (\ref{eqn:factorlambdaab}) has a closed-form solution: 
$A^* = X^{\dagger}P^* \Lambda^{\frac 1 2} \mbox{ and } B^* = \Lambda^{- \frac 1 2}Q^*$, 
where $X^{\dagger}$ is the pseudo-inverse of $X$. 
\end{corollary}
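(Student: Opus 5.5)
The plan is to reduce (\ref{eqn:factorlambda}) to a weighted singular-value thresholding problem using Prop.~\ref{prop:diagonalcost} with $f(W)=\|X-W\|_F^2$. Then solve that problem in closed form, lift the solution back to factors $(P^*,Q^*)$, and finally transfer it to (\ref{eqn:factorlambdaab}) by a change of variables.

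\textbf{Step 1: reduction to weighted thresholding.} By Prop.~\ref{prop:diagonalcost}, (\ref{eqn:factorlambda}) has the same optimal value as
\[
\min_{\rank(W)\le k}\|X-W\|_F^2+2\sum_{i=1}^k\omega_i\sigma_i(W),
\]
where $\omega$ is the ascending sort of the $\lambda_i$. Moreover, optimal factorizations of an optimal $W^*$ are optimal for (\ref{eqn:factorlambda}).

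\textbf{Step 2: solving the weighted problem.} Write $W=\tilde U\,\diag(s)\tilde V^{\transpose}$ and expand $\|X-W\|_F^2=\|X\|_F^2-2\trace(W^{\transpose}X)+\sum_i s_i^2$. Von Neumann's trace inequality gives $\trace(W^{\transpose}X)\le\sum_i s_i\sigma_i(X)$, with equality when $\tilde U=U_k$ and $\tilde V=V_k$. The problem therefore decouples into
\[
\min_{s_1\ge\dots\ge s_k\ge0}\ \sum_i \bigl(s_i^2-2s_i\sigma_i+2\omega_is_i\bigr).
\]
Since $\omega$ is ascending and $\sigma_i$ is descending, the unconstrained minimizers $s_i=(\sigma_i-\omega_i)^+$ are automatically descending. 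The ordering constraint is therefore inactive. This is exactly where the sorting matters, and I expect it to be the only delicate point: with non-monotone weights the problem would be nonconvex and coupled.

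A note on scaling. With the factor $2$ in OPT2 the threshold is $\omega_i=\lambda_{(k+1-i)}$, matching the displayed $\sigma_i-\lambda_{(k+1-i)}$ (the paper's indexing $\lambda_{(k)}$ paired with $\sigma_1$ uses descending order labels). I will keep the constants consistent with Prop.~\ref{prop:diagonalcost}.

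\textbf{Step 3: lifting back to factors.} Take $P^*=U_k\diag(\sqrt{s_i})\Omega$ and $Q^*=\Omega^{\transpose}\diag(\sqrt{s_i})V_k^{\transpose}$, with $\Omega$ the permutation matrix aligning the $i$-th largest $s_i$ with the $i$-th smallest $\lambda$. By the intuition argument after Prop.~\ref{prop:diagonalcost}, the regularizer then equals $2\sum_i\omega_is_i$. The objective of (\ref{eqn:factorlambda}) at $(P^*,Q^*)$ thus equals the optimal OPT2 value, so $(P^*,Q^*)$ is optimal.

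\textbf{Step 4: transfer to (\ref{eqn:factorlambdaab}).} Substitute $P=XA\Lambda^{-1/2}$-type variables, i.e. set $P=XA\Lambda^{\frac12}$ and $Q=\Lambda^{\frac12}B$ (assuming $\Lambda\succ0$ on the relevant coordinates). Then (\ref{eqn:factorlambdaab}) becomes (\ref{eqn:factorlambda}) restricted to $P$ in the column space of $X$, so its value is at least the optimum of (\ref{eqn:factorlambda}). The optimizer $P^*$ has columns in $\mathrm{span}(U_k)\subseteq\mathrm{col}(X)$. Hence $A^*=X^{\dagger}P^*\Lambda^{\frac12}$ (up to the paper's normalization of $\Lambda$) satisfies $XA^*=XX^{\dagger}P^*\Lambda^{\frac12}=P^*\Lambda^{\frac12}$, and $B^*=\Lambda^{-\frac12}Q^*$. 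The lower bound is attained, which proves optimality. Full column rank of $X$ ($m\ge n$) ensures $XX^{\dagger}$ is the projector onto $\mathrm{col}(X)\supseteq\mathrm{span}(U_k)$.
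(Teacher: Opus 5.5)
Your proposal is correct and follows essentially the paper's argument. The paper likewise lower-bounds the regularizer by $2\sum_i\lambda_{(k-i+1)}\sigma_i(PQ)$ via Prop.~\ref{prop:diagonalcost} and the reconstruction term by $\sum_{i\le k}(\sigma_i-\sigma_i^*)^2+\sum_{i>k}\sigma_i^2$ via the von Neumann trace inequality. It then solves the separable problem to get $\sigma_i^*=(\sigma_i-\lambda_{(k-i+1)})^+$, noting the ordering constraint is automatically satisfied, checks that the displayed $(P^*,Q^*)$ attains the bound, and transfers to (\ref{eqn:factorlambdaab}) through the same column-space argument. The only difference is that you phrase the middle step as solving OPT2 in $W$ rather than bounding the factored objective directly.

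One slip to fix in Step 4: the substitution must be $P=XA\Lambda^{-\frac12}$, $Q=\Lambda^{\frac12}B$, not $P=XA\Lambda^{\frac12}$. Only then do you get $PQ=XAB$ and $\|P\Lambda^{\frac12}\|_F=\|XA\|_F$. Your formula $A^*=X^{\dagger}P^*\Lambda^{\frac12}$ is already the one consistent with the corrected substitution.
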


\myparab{Uniform solution structure for regularizing $\|W\|^p_*$.} In~\citep{dropMF}, it was observed that when we use a neural net $X = PQ^{\transpose}$ (with learnable parameters being $P$ and $Q$) to train a model and a standard dropout is used, the objective is equivalent to solving $\min_W \|X - W\|^2_F + \lambda \|W\|^2_*$. While the regularizer $\|W\|^2_*$ deviates from the standard one $\|W\|_*$, the optimal solution here is $W = U S_{\mu}(\Sigma)V^{\transpose}$, where $U\Sigma V^{\transpose}$ is SVD of $X$, and $S_{\mu}(\Sigma)$ is a diagonal matrix such that its $(i,i)$-th element is $(\Sigma_{i,i} - \mu)^+$, in which $\mu$ depends on the data $X$ and $\lambda$. In other words, the optimal solution for regularizers $\|W\|^2_*$ and $\|W\|_*$ are strikingly similar. Thus, we are interested in how regularizers with different exponents are connected. Our main observation is that for \emph{any} regularizer $\|W\|^p_*$ ($p \geq 1$), the optimal solution has the same structure. 

\begin{lemma}Let $X \in \reals^{m \times n}$, where $m \geq n$. Let the $d$ leading SVDs of $X$ be $U_d$ and $V_d$ respectively. Let $\sigma_1, \dots, \sigma_n$ be the singular values of $X$. Consider the optimization problem: 
\begin{equation}
    \min_W \frac 1 2 \|X - W\|^2_F + \lambda \|W\|^p_*.
\label{eqn:exponent}
\end{equation}
Let $\mu_k = \frac 1 k \sum_{i \leq k}\sigma_i(X)$, $\eta(\mu_k)$ be the positive root of the function $z + \lambda k z^{p - 1}- k \mu_k$ and $d$ be the largest value such that $\sigma_d(X) - \lambda (\eta(\mu_d))^{p - 1}\geq 0$. 
Let $\mu = \lambda (\eta(\mu_d))^{p - 1}$. 
The optimal solution of $W$ is $U_d \diag(\sigma_1 - \mu, \sigma_2 - \mu, \dots, \sigma_d - \mu) V^{\transpose}_d$. 
\label{lem:exponent}
\end{lemma}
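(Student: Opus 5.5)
First I reduce to a problem on singular values, then solve it through its KKT conditions. By von Neumann's trace inequality, $\langle X, W\rangle \le \sum_i \sigma_i(X)\sigma_i(W)$, with equality when $W$ shares the singular vectors of $X$ in matching order. The term $\|W\|^p_*$ depends only on the singular values $s_i=\sigma_i(W)$. So for any fixed singular value profile $s$, the loss $\frac12\|X-W\|_F^2 = \frac12\|X\|_F^2 - \langle X,W\rangle + \frac12\|W\|_F^2$ is minimized by $W=U\,\diag(s)V^{\transpose}$. The problem therefore becomes the vector problem
\[
\min_{s\in\reals^n,\ s\ge 0}\ \tfrac12\textstyle\sum_i(\sigma_i-s_i)^2+\lambda\big(\sum_i s_i\big)^p ,
\]
and the ordering constraint on $s$ can be dropped, because the minimizer turns out to be ordered. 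This objective is strictly convex, since $t\mapsto t^p$ is convex and nondecreasing for $p\ge1$, composed with a linear map, plus a strongly convex quadratic. Hence the minimizer is unique and is characterized by the KKT conditions.

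Next I write out the KKT conditions. Put $T=\sum_i s_i$ and $\mu=\lambda p T^{p-1}$; for $p=1$ this is $\mu=\lambda$. Stationarity with the multipliers for $s\ge0$ gives $s_i=(\sigma_i-\mu)^+$. So the solution is a uniform soft-threshold of the singular values at a common level $\mu$. Its support is $\{1,\dots,d\}$ for the largest $d$ with $\sigma_d>\mu$, so $s$ is automatically sorted, which justifies dropping the ordering constraint. What remains is the scalar self-consistency condition $T=\sum_{i\le d}(\sigma_i-\mu)=d\mu_d-d\mu$ with $\mu=\lambda pT^{p-1}$, that is,
\[
T+\lambda p\, d\, T^{p-1}=d\mu_d .
\]
Up to the factor $p$, which I will absorb into $\lambda$ to match the stated normalization (I will check the paper's convention for this), this is exactly the equation $z+\lambda k z^{p-1}-k\mu_k=0$ with $k=d$. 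For $p>1$ the left side is strictly increasing in $z>0$ and ranges from $0$ to $\infty$, so the positive root $\eta(\mu_d)$ exists and is unique; for $p=1$ everything is explicit. This gives $\mu=\lambda\,\eta(\mu_d)^{p-1}$, as claimed.

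The main obstacle is pinning down $d$ consistently. The candidate threshold $\mu^{(k)}=\lambda\eta(\mu_k)^{p-1}$ depends on $k$, and I must show that the KKT-consistent support size is exactly the largest $k$ with $\sigma_k-\mu^{(k)}\ge0$. My approach is to prove that $g(k)=\sigma_k-\mu^{(k)}$ changes sign only once. Increasing $k$ by one adds $\sigma_{k+1}-\mu^{(k)}$ to the right-hand side $k\mu_k-k\mu$. If $\sigma_{k+1}\le\mu^{(k)}$, the extra index would contribute nonpositive mass, so $T$ and hence $\mu$ can only shift so that $\sigma_{k+1}\le\mu^{(k+1)}$ still holds. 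I will argue this monotonicity by comparing the two root equations directly. Alternatively, and more safely, I will avoid the combinatorics: the unique KKT point has some support size $d^\star$, and at $d^\star$ both $\sigma_{d^\star}\ge\mu$ and $\sigma_{d^\star+1}\le\mu$ hold. Uniqueness of the convex minimizer then shows that $d^\star$ coincides with the $d$ defined in the lemma, provided the single-crossing property holds. Finally, I assemble $W=U_d\,\diag(\sigma_1-\mu,\dots,\sigma_d-\mu)V_d^{\transpose}$.
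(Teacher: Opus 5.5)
Your route is the natural way to carry out what the paper only asserts. You use von Neumann to reduce to the strictly convex problem $\min_{s\ge 0}\tfrac12\sum_i(\sigma_i-s_i)^2+\lambda(\sum_i s_i)^p$, then KKT to get a common soft threshold $\mu=p\lambda T^{p-1}$, then a scalar self-consistency equation. The paper gives no proof of its own and only calls the lemma a straightforward generalization of Prop.~12 of \citep{dropMF}.

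The genuine gap is the step you flag yourself, the identification of $d$. As written, your conclusion is explicitly conditional on single crossing. The sentence ``$T$ and hence $\mu$ can only shift so that\dots'' is an assertion, not an argument. The ``safer'' alternative is not independent: uniqueness of the minimizer says nothing about the combinatorially defined $d$, and you invoke single crossing again. The missing observation is short. For $p>1$ put $c=p\lambda$ and
\[
G_k(\mu)=k\mu_k-k\mu-(\mu/c)^{1/(p-1)},\qquad \mu\ge 0.
\]
$G_k$ is strictly decreasing and vanishes exactly at $\mu^{(k)}=c\,\eta(\mu_k)^{p-1}$: substitute $z=(\mu/c)^{1/(p-1)}$, with $\eta$ the root of the corrected equation below. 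Because $G_{k+1}(\mu)=G_k(\mu)+\sigma_{k+1}-\mu$, you get $G_{k+1}(\sigma_{k+1})=G_k(\sigma_{k+1})$. Hence $\sigma_{k+1}-\mu^{(k+1)}$ and $\sigma_{k+1}-\mu^{(k)}$ always have the same sign. With $\sigma_{k+2}\le\sigma_{k+1}$ this gives single crossing by induction, but you do not even need it. Take the largest $d$ with $\sigma_d\ge\mu^{(d)}$; it exists because $G_1(\sigma_1)\le 0$. If $d<n$, then $\sigma_{d+1}<\mu^{(d+1)}$ forces $\sigma_{d+1}<\mu^{(d)}\le\sigma_d$. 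So $s_i=(\sigma_i-\mu^{(d)})^+$ has support $\{1,\dots,d\}$, $\sum_i s_i=d\mu_d-d\mu^{(d)}=\eta(\mu_d)$, and $\mu^{(d)}=c(\sum_i s_i)^{p-1}$. That is exactly your KKT system, and strict convexity finishes. For $p=1$ every $\mu^{(k)}$ equals $\lambda$ and this is ordinary singular value thresholding. The lemma silently omits the case $\sigma_1<\lambda$, where $W=0$.

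Separately, do not ``absorb'' $p$ into $\lambda$. Your stationarity computation is correct, and it shows the statement as written is false for $p>1$. With objective $\tfrac12\|X-W\|_F^2+\lambda\|W\|_*^p$, the root equation must be $z+p\lambda k z^{p-1}-k\mu_k=0$ and $\mu=p\lambda\,\eta(\mu_d)^{p-1}$. Rescaling $\lambda$ instead changes the objective to $\tfrac12\|X-W\|_F^2+\tfrac{\lambda}{p}\|W\|_*^p$, and no convention in the paper rescues this, since the $\tfrac12$ is already in the loss. For instance, with $n=1$ and $p=2$ the minimizer is $X/(1+2\lambda)$, while the stated formula gives $X/(1+\lambda)$. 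The stated formula is what the unnormalized loss $\|X-W\|_F^2+\lambda\|W\|_*^2$ produces, which is how the $p=2$ result appears in the paper's appendix. So say explicitly that you are proving the corrected lemma.
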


Lemma~\ref{lem:exponent} is a straightforward generalization of Prop.~12 in~\citep{dropMF} so the contribution here is a conceptual one: it implies that regularizers $\|W\|^p_*$ with different $p$'s differ in how the shrinkage variable $\mu$ is obtained. Observe that $\mu$ is an important hyper-parameter that needs to be extensively tuned against data, all regularizers $\|W\|^p_*$ provide the same learning power. As noted earlier, $\mu$ is a function of $X$ (i.e., different $X$ will result in different $\mu$) unless $p = 1$. In addition, $\lambda$ needs to be rescaled when $X$ is scaled by a constant factor unless $p = 2$. This implies it could be easier to tune $\mu$ when $p = 1, 2$, and explains why only $p = 1, 2$ have been extensively considered.

\vspace{-.2cm}
\section{Low-Rank Frobenius norm based regularizations}
\label{section:Tikhonov}
\vspace{-.1cm}
This section presents the close-form low-rank estimators with comparable performance to the state of the art algorithms.

\myparab{Approximate Low rank DLAE and EDLAE.} 
Recall that  DLAE solves
 \begin{equation}\label{eq:full_edlae}
     \begin{split}
         & \min_{W}||X-XW||^2_F+||\Lambda^{\frac{1}{2}}W||^2_F\\
         s.t.& \quad \Lambda=\frac{p}{1-p} dMat(diag(X^T X)),
     \end{split}
 \end{equation}
 and EDLAE with the additional $diag(W)=0$ constraint. 
 Even though both the Nuclear norm based regularization as well as the full rank DLAE and EDLAE solutions all have closed-form solutions, such solution is unknown for the low-rank DLAE and EDLAE, whose existing solution is based on ADMM\citet{DBLP:conf/nips/Steck20}. 
 The closed form solutions will help both better understand and compare these models, and determine the hyper-parameters, which is usually difficult for the ADMM type solutions. 
 
 For DLAE, since it can be considered a special form of Tikhonov regularization (Eq 6) which has a closed form~\cite{JinKDD2021}, its closed form solution, referred to LR-DLAE, is immediately available (See Line $6$ in Table ~\ref{tab:summary}). 
 However, for EDLAE, it has the zero diagonal constraint, which make the exact solution difficult to express. 
 Here, we present an approximate low-rank closed-form solution of \cref{eq:full_edlae} by decomposing the optimization problem into two subproblems, which is similar to \citep{JinKDD2021}:
We first consider the full-rank closed form solution for EDLAE ~\cite{DBLP:conf/nips/Steck20}, which is: 
\begin{equation*}
\begin{split}
       W^*&= I - C\cdot dMat(1\oslash diag(C)), \text{ where }C =(X^TX+\Lambda)^{-1}\\
\end{split}
\end{equation*}
Then, we consider two approaches to produce low-rank matrix approximation of $W^*$: 

\noindent{\bf (Method 1 (LR-EDLAE-1): ) Selecting $\widehat{W}$ to best approximate the performance of $W^*$ without the zero diagonal constraint:}
\begin{equation}\label{eq:subp2}
\begin{split}
 \widehat{W} & =\arg\min_{rank(W) \leq k} ||\overline{X}W^*-\overline{X}W||_F^2  \\
   &= \arg \min_{rank(W) \leq k} ||X W^* - XW||_F^2+ 
    ||\Lambda^{\frac{1}{2}} (W^*- W)||_F^2
\end{split}
\end{equation}
where $\overline{X}=\begin{bmatrix} X  \\ {\Lambda^{\frac{1}{2}}}   \end{bmatrix}$. Noting, in the full rank problem \cref{eq:full_edlae}, it forces the diagonal of derived matrix ($W^*$) to be zero. Here, we relax the zero diagonal constraint - the diagonal of low rank approximate matrix $\widehat{W}$ doesn't have to be zero, which has also been discussed in \citep{DBLP:conf/nips/Steck20}. The closed-from solution of \cref{eq:subp2} is given by:
\begin{equation} 
\boxed{
\widehat{W}= W^*(Q_k Q_k^T)
}
\end{equation}
where $Q_k$ comes from SVD:
\begin{equation*}
    \begin{split}
        \overline{Y}^*&=\overline{X}W^*, \text{ and } \overline{Y}^*(k) =P_k \Sigma_k Q^T_k
    \end{split}
\end{equation*}

\noindent{\bf (Method 2 (LR-EDLAE-2):) SVD approximation of $W^*$:} The alternative solution is to simply perform SVD, and which gives a low-rank estimation of $W^*$. 

Note that in both approaches, the hard constraint of zero-diagonal on $\widehat{W}$ is relaxed. In the next section, the experimental results show both approaches can provide comparable or better performance compared with the ADMM solution, and also very close to the full rank EDLAE solution.

\vspace*{-1.0ex}
\section{Experimental Results}
\label{section:experiments}

In this section, we experimentally study different regularizations for linear recommendation models.  
Our goal is to validate the effectiveness of various regularizations (all can be categorized under nuclear norm and Frobenius norm) together with their simple closed-form solutions. 
We aim to answer three questions: \textbf{Q1.} How does the closed form solution of low rank Frobenius norm perform compared with the ADMM solutions (Section 4) and how does the weighted nuclear norm regularizer for matrix factorization (Proposition~\ref{prop:diagonalcost} and Corollary~\ref{cor:closedform}) perform?  \textbf{Q2.} What is the tradeoff between the number of factors (rank $k$) and the recommendation accuracy?  \textbf{Q3.} How does the ordering of weights from small to large (non-descending) for adjusting the singular values (Corollary~\ref{cor:closedform}) affect the recommendation performance?

\noindent{\bf Experimental Setup:}
We use  three  commonly used datasets for recommendation studies: MovieLens 20 Million (ML-20M) \citep{ml20m}, Netflix Prize (Netflix) \citep{netflix}, and the Million Song Data (MSD)\citep{msddataset}.  We obtained these datasets and all benchmarks from authors of EASE~\citep{Steck_2019}, EDLAE~\citep{DBLP:conf/nips/Steck20},  and Mult-VAE~\citep{liang2018variational}. 

Similar to the latest study in EASE~\citep{Steck_2019}, and  EDLAE~\citep{DBLP:conf/nips/Steck20}, we consider the following state-of-the-art recommendation models: ALS (WMF)~\citep{hu2008collaborative} for matrix factorization approaches, SLIM~\citep{slim01}, EASE~\citep{Steck_2019}, and EDLAE~\citep{DBLP:conf/nips/Steck20} for linear autoendoers, CDAE ~\citep{cdae16}, Mult-DAE and  Mult-VAE~\citep{liang2018variational} for deep learning models.  The experiment settings for these baseline are the same as  ~\citep{liang2018variational,Steck_2019,DBLP:conf/nips/Steck20}. Also we follow their practice ~\citep{liang2018variational,Steck_2019,DBLP:conf/nips/Steck20} for the {\em strong generalization} by splitting the users into training, validation and tests group, and report performance metrics $Recall@20$, $Recall@50$ and $nDCG@100$. 
Finally, we note that our code are openly available (see Appendix).

\noindent{\bf Q1: Low-Rank Frobenius Norm  and (Weighted) Nuclear Norm Regularization:} 
In this experiment, we evaluate the low rank Frobenius norm regularization and the nuclear norm regularization (\cref{eq:regulizedpca}) for the matrix factorization.
Here EDLAE-ADMM, LRR, LR-DLAE, LR-EDLAE-1, LR-EDLAE-2, MF dropout and LVAE are listed in \cref{tab:summary} in Appendix.
To determine the non-descending order of weights $\lambda_i$ for the closed-form solution in (\cref{eq:mfsolution}), we follow the practice in weighted nuclear norm regularization in~\citep{gu2014weighted} as well as the optimized pPCA weight~\citep{ppca_elbo}. Let $\lambda_i=\frac{C}{\sigma_i}$ where $C$ is a hyperparameter, and we perform grid-search to find the optimal one.

\begin{table}[!b]
\caption{The performance comparison between different regularizations. For notation, please refer \cref{tab:summary} for more details.}
\label{tab:main-table}
\resizebox{\textwidth}{!}{%
\begin{tabular}{|c|c|l|c|c|c|c|c|c|c|c|c|}
\hline
\multicolumn{3}{|c|}{\multirow{2}{*}{Model}}                       & \multicolumn{3}{c|}{ML-20M} & \multicolumn{3}{c|}{Netflix} & \multicolumn{3}{c|}{MSD} \\ \cline{4-12} 
\multicolumn{3}{|c|}{} &
  Recall@20 &
  Recall@50 &
  nDCG@100 &
  Recall@20 &
  Recall@50 &
  nDCG@100 &
  Recall@20 &
  Recall@50 &
  nDCG@100 \\ \hline
\multirow{8}{*}{Frobinius Norm} &
  \multicolumn{2}{c|}{EASE} &
  0.391 &
  0.521 &
  0.420 &
  0.362 &
  0.445 &
  0.393 &
  0.333 &
  0.428 &
  0.389 \\ \cline{2-12} 
 &
  \multicolumn{2}{c|}{DLAE} &
  0.392 &
  0.527 &
  0.424 &
  0.362 &
  0.446 &
  0.395 &
  0.329 &
  0.426 &
  0.387 \\ \cline{2-12} 
 &
  \multicolumn{2}{c|}{EDLAE} &
  0.393 &
  0.523 &
  0.424 &
  0.366 &
  0.449 &
  0.398 &
  0.334 &
  0.429 &
  0.392 \\ \cline{2-12} 
 &
  \multicolumn{2}{c|}{EDLAE-ADMM} &
  0.392 &
  0.524 &
  0.424 &
  0.365 &
  0.448 &
  0.396 &
  0.330 &
  0.424 &
  0.386 \\ \cline{2-12} 
 &
  \multicolumn{2}{c|}{LRR} &
  0.376 &
  0.511 &
  0.408 &
  0.348 &
  0.431 &
  0.380 &
  0.248 &
  0.335 &
  0.301 \\ \cline{2-12} 
 &
  \multicolumn{2}{c|}{LR DLAE} &
  0.392 &
  0.527 &
  0.424 &
  0.362 &
  0.445 &
  0.395 &
  0.306 &
  0.403 &
  0.363 \\ \cline{2-12} 
 &
  \multicolumn{2}{c|}{LR-EDLAE-1} &
  0.392 &
  0.523 &
  0.424 &
  0.365 &
  0.449 &
  0.398 &
  0.327 &
  0.423 &
  0.384 \\ \cline{2-12} 
 &
  \multicolumn{2}{c|}{LR-EDLAE-2} &
  0.392 &
  0.523 &
  0.424 &
  0.365 &
  0.449 &
  0.398 &
  0.325 &
  0.421 &
  0.382 \\ \hline
\multirow{3}{*}{Nuclear Norm}    & \multicolumn{2}{c|}{MF dropout} & 0.367   & 0.501   & 0.393   & 0.334    & 0.418   & 0.365   & 0.270  & 0.367  & 0.328  \\ \cline{2-12} 
 &
  \multicolumn{2}{c|}{Regularized PCA} &
  0.364 &
  0.501 &
  0.392 &
  0.331 &
  0.417 &
  0.365 &
  0.229 &
  0.313 &
  0.279 \\ \cline{2-12} 
 &
  \multicolumn{2}{c|}{LVAE} &
  0.348 &
  0.474 &
  0.378 &
  0.325 &
  0.405 &
  0.357 &
  0.205 &
  0.254 &
  0.286 \\ \hline
\multirow{5}{*}{Baseline} &
  \multicolumn{2}{c|}{WMF/ALS} &
  0.360 &
  0.498 &
  0.386 &
  0.316 &
  0.404 &
  0.351 &
  0.211 &
  0.312 &
  0.257 \\ \cline{2-12} 
 &
  \multicolumn{2}{c|}{SLIM} &
  0.370 &
  0.495 &
  0.401 &
  0.347 &
  0.428 &
  0.379 &
  \multicolumn{3}{c|}{no results in \citep{slim01}} \\ \cline{2-12} 
 &
  \multicolumn{2}{c|}{CDAE} &
  0.391 &
  0.523 &
  0.418 &
  0.343 &
  0.428 &
  0.376 &
  0.188 &
  0.283 &
  0.237 \\ \cline{2-12} 
 &
  \multicolumn{2}{c|}{MULT-DAE} &
  0.387 &
  0.524 &
  0.419 &
  0.344 &
  0.438 &
  0.380 &
  0.266 &
  0.363 &
  0.313 \\ \cline{2-12} 
 &
  \multicolumn{2}{c|}{MULT-VAE} &
  0.395 &
  0.537 &
  0.426 &
  0.351 &
  0.444 &
  0.386 &
  0.266 &
  0.364 &
  0.316 \\ \hline
\multicolumn{3}{|c|}{\# items} &
  \multicolumn{3}{c|}{20108} &
  \multicolumn{3}{c|}{17769} &
  \multicolumn{3}{c|}{41140} \\ \hline
\multicolumn{3}{|c|}{\# users} &
  \multicolumn{3}{c|}{136677} &
  \multicolumn{3}{c|}{463435} &
  \multicolumn{3}{c|}{571353} \\ \hline
\multicolumn{3}{|c|}{\# interactions} &
  \multicolumn{3}{c|}{10mil} &
  \multicolumn{3}{c|}{57mil} &
  \multicolumn{3}{c|}{34mil} \\ \hline
\end{tabular}%
}
\end{table}

In Table~\ref{tab:main-table}, we can see that the weighted nuclear norm regularization (LVAE) based matrix factorization actually performs worse than the constant weighted version (Regularized PCA). And the latter shows very strong performance comparing against the WFM/ALS (one of the most popular implicit matrix factorization algorithm). We also observe the closed form solutions (LR-DLAE, LR-EDLAE-1 and LR-EDLAE-2) all perform very comparable with the ADMM based low rank solution and the full rank DLAE and EDLAE solutions.

\begin{figure}
    \centering
    \subfigure[]{\includegraphics[width=0.32\textwidth]{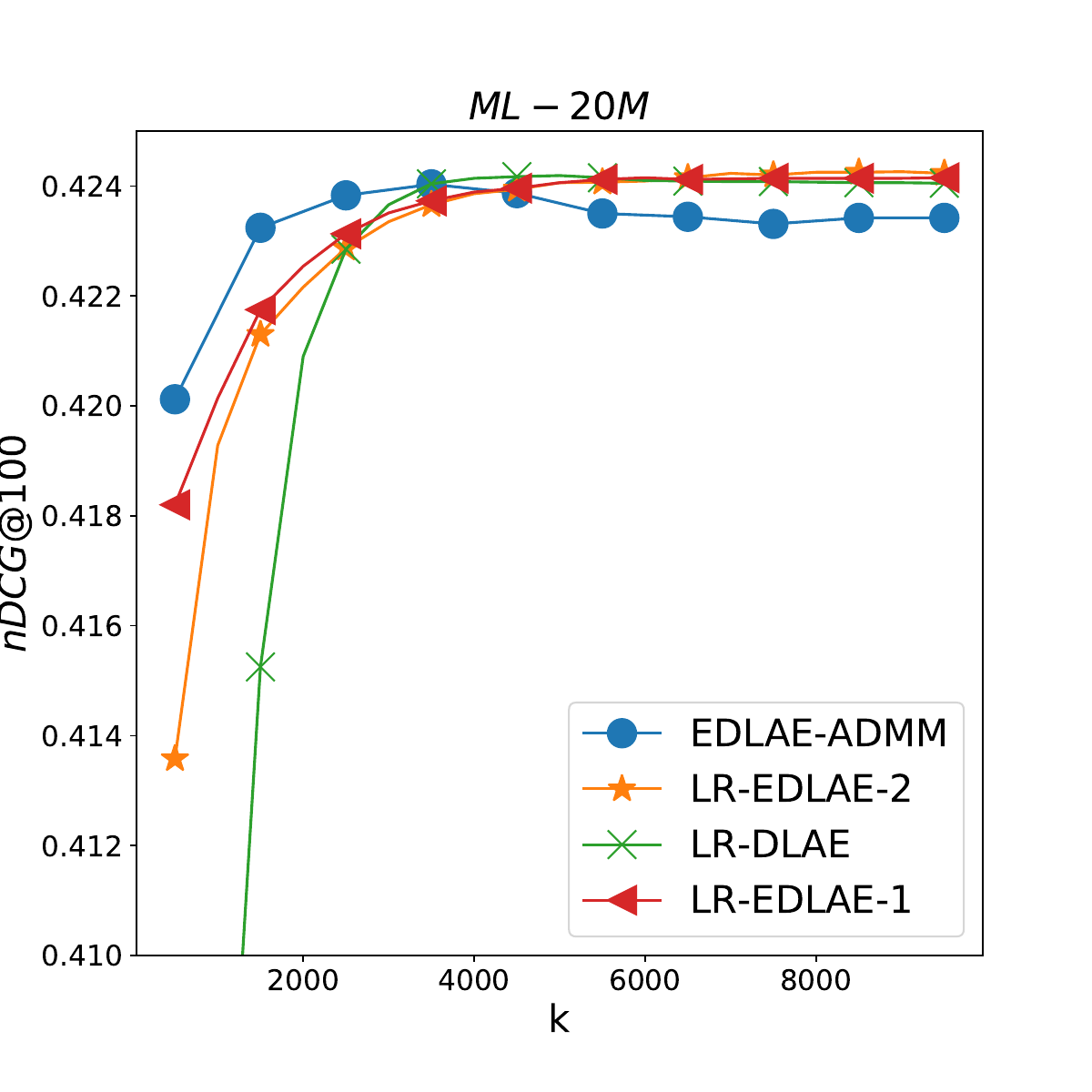}} 
    \subfigure[]{\includegraphics[width=0.32\textwidth]{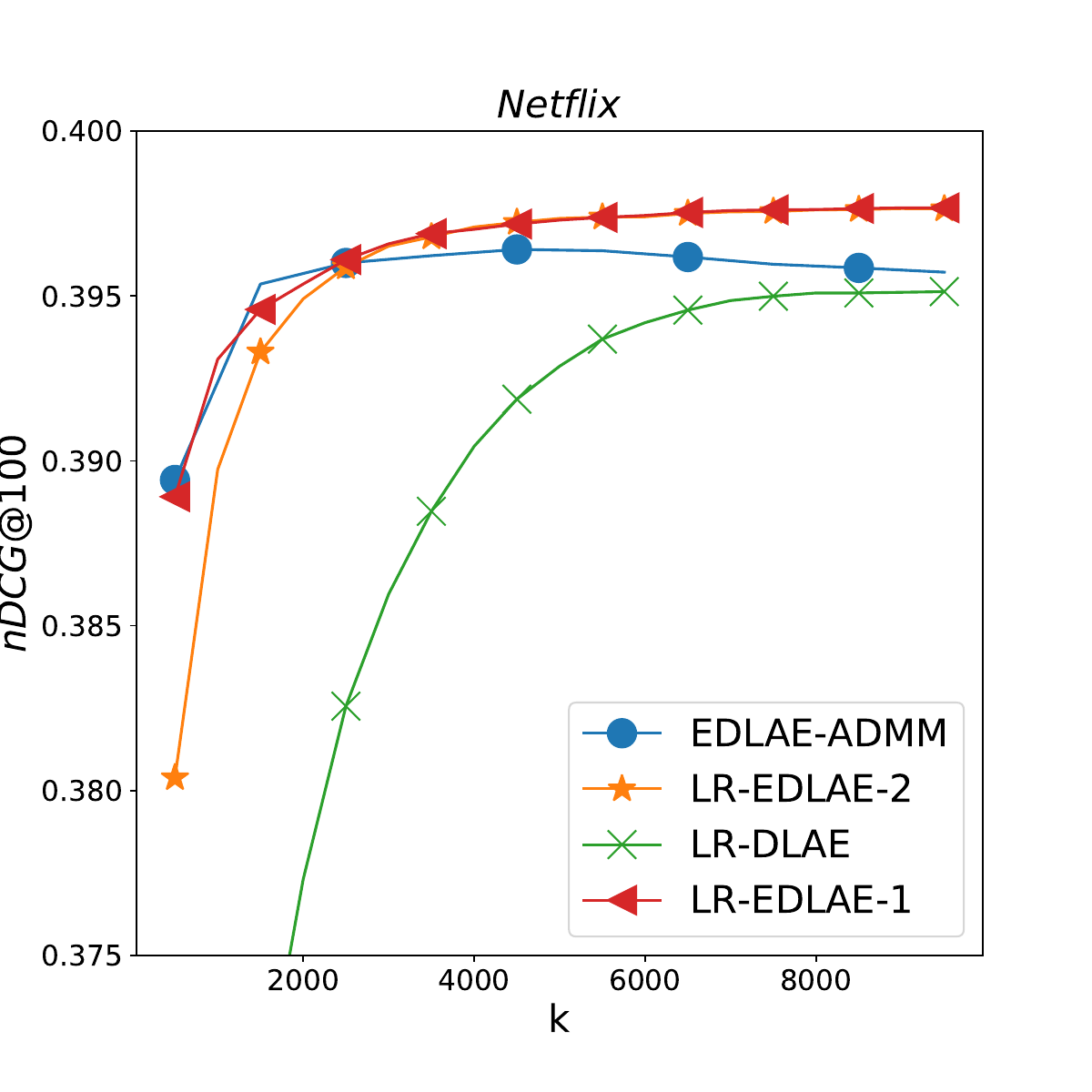}} 
    \subfigure[]{\includegraphics[width=0.32\textwidth]{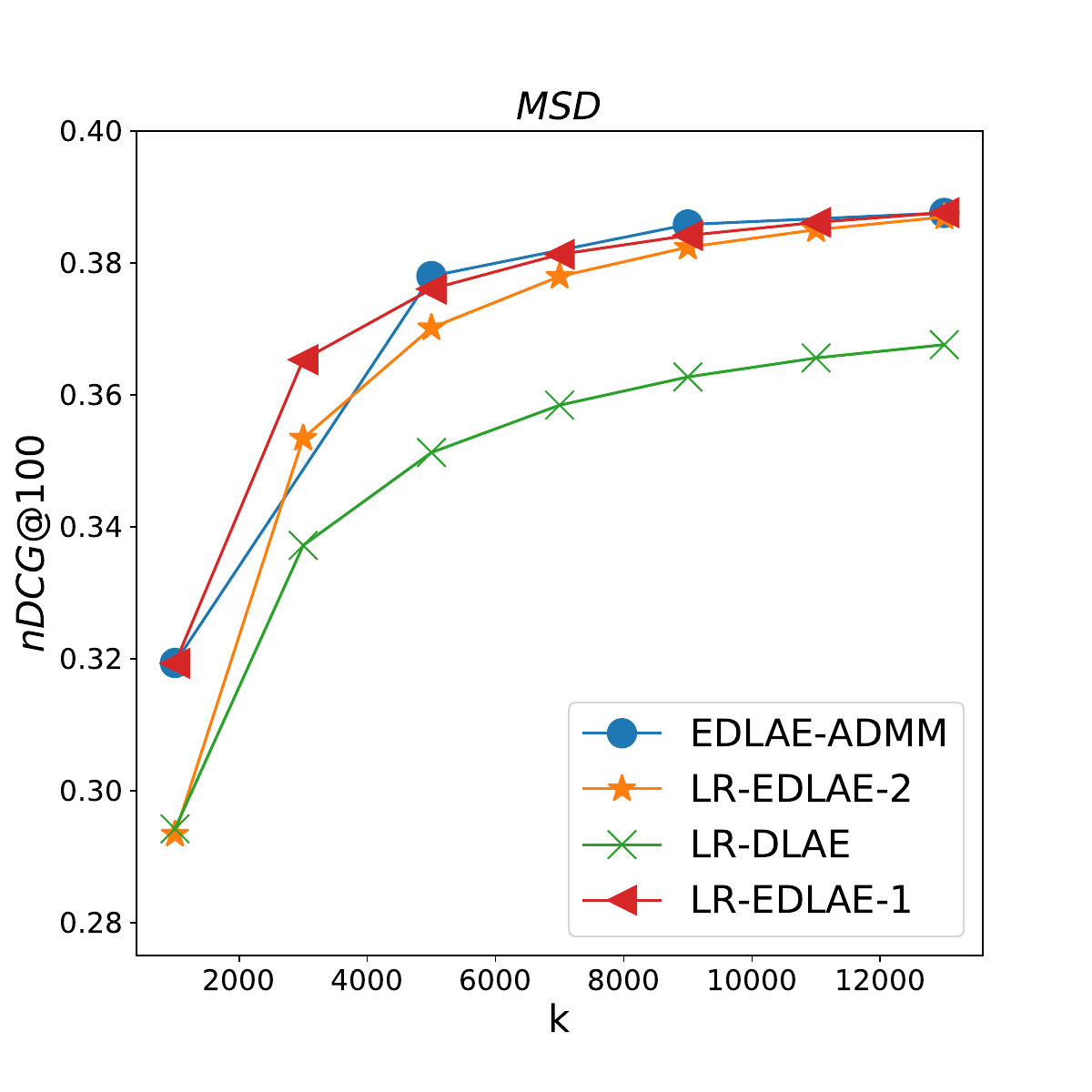}}
    \caption{Low rank models $nDCG@100$ on test data for 3 datasets.}
    \label{fig:foobar}
\end{figure}

\noindent{\bf Q2: nDCG vs Rank $k$ for low-rank Frobenius norm:}
In this experiment, we focus on evaluating the recommendation accuracy (using nDCG) against the rank $k$. Specifically, we vary the rank $k$ from around $1K$ to around $10K$, and we tune and compare four different methods, including EDLAE-ADMM, LR-DLAE, LR-EDLAE-1, and LR-EDLAE-2. We have the following observations: 1) As $k$ increases, the recommendation accuracy also increases in general; however, most of them reaches a plateau around similar $K$, and for different datasets, the saturating point varies. 2) LR-DLAE performs worse than EDLAE based approaches in two out of three datasets. This partially demonstrates the benefits of zero-diagonal constraint. 3) The closed form solution of EDLAE performs comparable or even slightly better than ADMM methods as $K$ grows; but when $K$ is relatively small, ADMM method perform slightly better. But none-the-less, for most of the reasonable choices of $k$ when low-rank approximates full rank, the closed form solution performs comparable or better.

\noindent{\bf Q3: Impact of Weight Ordering:}
Finally, we study how the ordering of weights from small to large (non-descending) for adjusting the singular values (Corollary~\ref{cor:closedform}) affects the recommendation performance using matrix factorization (closed-form solution in Eq.~\ref{eq:mfsolution}). Our results are in Table~\ref{tab:mf-table}. Here, we obtain the searched optimal weight parameters from weighted Tikhonov regularization (following the approach in ~\cite{JinKDD2021}), and map it back to the parameters in the closed form solution (Proposition~\ref{Tikhonovpower}). Then we sort the parameters in the non-descending order, and then report their results in the second row of Table~\ref{tab:mf-table}. We can see that the recommendation  performance becomes significant worst. This help confirm our conjecture that the strict ordering of weight on matrix factorization and other regularizations can be an inherent limitation for those approaches.

\begin{table}[]
\caption{Investigating the weight ordering of Matrix Factorization}
\label{tab:mf-table}
\resizebox{\textwidth}{!}{%
\begin{tabular}{|c|l|c|c|c|c|c|c|c|c|c|}
\hline
\multicolumn{2}{|c|}{\multirow{2}{*}{Model}} & \multicolumn{3}{c|}{ML-20M}      & \multicolumn{3}{c|}{Netflix}     & \multicolumn{3}{c|}{MSD}         \\ \cline{3-11} 
\multicolumn{2}{|c|}{}                       & Recall@20 & Recall@50 & nDCG@100 & Recall@20 & Recall@50 & nDCG@100 & Recall@20 & Recall@50 & nDCG@100 \\ \hline
\multicolumn{2}{|c|}{MF/LRR weighted} & 0.3806 & 0.5175 & 0.4102 & 0.3484 & 0.4320 & 0.3797 & 0.2508 & 0.3390 & 0.3037 \\ \hline
\multicolumn{2}{|c|}{MF sorted}     & 0.3017 & 0.4507 & 0.3361 & 0.2860 & 0.3801 & 0.3265 & 0.2288 & 0.3148 & 0.2802 \\ \hline
\end{tabular}%
}
\end{table}

\section{Conclusion and Discussion}
This work provides a complete analysis on the recently proposed linear models for recommendation systems. Despite that models leverage different deep learning techniques, they achieve similar performance. We find that this is not coincident: all the models add either a nuclear-norm-based (Lemma~\ref{lem:nuclear}) or a Frobenius-norm based regularizer. The nuclear-norm-based approach results in estimators that keep $X$'s singular vectors  and shrink its singular values in a quite rigid way (Proposition~\ref{prop:diagonalcost} and Lemma~\ref{lem:exponent}), which limit their prediction power. The Frobenius-norm models are more express (Proposition~\ref{Tikhonovpower}) and effective but their estimators are either full-rank or do not have closed form solutions. To get the best of both nuclear and Frobenius worlds, 
we propose two low-rank and closed-form estimators (Sec.~\ref{section:Tikhonov}) based on carefully generalizing Frobenius-norm based regularizers. These estimators have competitive performance against linear performance leaders, and thus concisely pack all the benefits obtained by a recent long line of research and abstract out all the computation nuance.

\newpage

\bibliographystyle{plainnat}
\bibliography{ref}

\newpage

\appendix

\vspace*{-3.0ex}
\section{Related Work}\label{problem}
\vspace*{-2.0ex}
There have been extensive researches on recommendation~\citep{charubook}. Besides the basic user-based and item-based collaborative filtering~\citep{DeshpandeK@itemKNN}, the full rank linear autoencoder approaches include SLIM \citep{slim01}, HOLISM ~\citep{hoslim14}, EASE~\citep{Steck_2019}, DLAE (Denoising linear autoencoder)~\cite{DBLP:conf/nips/Steck20}, whereas low-rank approaches include~\citep{fism13,LRec16,DBLP:conf/nips/Steck20}. All the customized recommendation has been enforcing zero diagonal constraints for generalization purpose, whereas we show an approximate closed-form solution for a two-term Tikhonov regularization without the zero diagonal constraint can be as effective as these models.

Matrix factorization has been been widely studied in practice, partially due to Netflix competition~\cite{mfsurvey}.  Methods like  SVD++ ~\cite{Koren08} and implicit Alternating Least Square (ALS) method~\cite{hu2008collaborative} (also weighted matrix factorization) have been very influential.  \citep{JinKDD2021} shows the relationship between linear autoencoders and matrix factorization, and pointed out a potential advantage of linear autoencoders. In this work, we take a step further to reveal a deeper relationship between Tikhonov regularized linear autoencoders and a few other regularizations including matrix factorization, and show the potential limitation of the class of regularization.  
We also utilize the linear variational autoencoders (LVAE) to study how the deep VAE based recommendation approaches ~\citep{cdaekdd17,liang2018variational,recvaewsdm20} relate to linear autoencoders and matrix factorization.

Outside recommendation, there have been a few recent studies on regularization landscapes of linear (variational) autoencoders~\citep{pmlr-v97-kunin19a,DBLP:conf/nips/BaoLSG20,NEURIPS2019_7e3315fe}. They do not provide the general weighted $\ell_2$ regularization and thus did not find the inherent limitation on the regularization (for MF). Our LVAE inspired regularization is also never studied before. 

Nuclear norm regularizers can recover low-rank matrices in the vector regression setting~\citep{negahban2011estimation}. Its weighted generalization can be applied in the area of image processing~\citep{gu2014weighted}. Because weighted nuclear-norm is usually not convex or differentiable, finding optimal solutions is difficult except for  a few special cases ~\citep{chen2013reduced}.

\begin{table}[th]
\caption{Investigating the closed/analytic solutions of linear models. $ dMat(\cdot)$ denotes a diagonal matrix, $diag(X)$ is the vector on the diagonal of $X$.}
\label{tab:summary}
\resizebox{\textwidth}{!}{%
\begin{tabular}{|c|l|c|c|}
\hline 
\multicolumn{2}{|c|}{Model}&regularization & solution \\ \hline  
\multirow{8}{*}{Frobenius norm} 
&{\text{1. EASE(full rank) }\citep{Steck_2019}}&
\parbox{8cm}{\begin{equation*} 
\begin{split}
&\min_W{||X-XW||^2_F+\lambda\cdot ||W||^2_F}\\
& s.t. \quad diag(W)=0\\
\end{split}
\end{equation*}} & 
\parbox{6cm}{\begin{equation*}
\begin{split}
C &= (X^TX+\lambda I)^{-1}\\
    W &= I-C\cdot dMat(diag(1\oslash C))\\
    \end{split}
\end{equation*}}\\ \cline{2-4}

&{\text{2. DLAE(full rank) }\citep{DBLP:conf/nips/Steck20}}&
\parbox{8cm}{\begin{equation*} 
\begin{split}
&\min_W{||X-XW||^2_F+ ||\Lambda^{1/2}\cdot W||^2_F}\\
& \Lambda =\frac{p}{1-p}dMat(diag(X^TX))\\
\end{split}
\end{equation*}} & 
\parbox{6cm}{\begin{equation*}
\begin{split}
W = (X^TX+\Lambda)^{-1}X^TX\\
    \end{split}
\end{equation*}}\\ \cline{2-4} 
&{\text{3. EDLAE(full rank) }\citep{DBLP:conf/nips/Steck20}}&
\parbox{8cm}{\begin{equation*} 
\begin{split}
&\min_W{||X-XW||^2_F+ ||\Lambda^{1/2}\cdot W||^2_F}\\
&\Lambda =\frac{p}{1-p}dMat(diag(X^TX))\\
& s.t. \quad diag(W)=0\\
\end{split}
\end{equation*}} & 
\parbox{6cm}{\begin{equation*}
\begin{split}
C &= (X^TX+\Lambda)^{-1}\\
    W &= I-C\cdot dMat(diag(1\oslash C))\\
    \end{split}
\end{equation*}}\\ \cline{2-4} 
&{\text{4. EDLAE-ADMM }\citep{DBLP:conf/nips/Steck20}} &
\parbox{8cm}{\begin{equation*} 
\begin{split}
&\min_{A,B}{||X-XAB^T||^2_F+ ||\Lambda^{1/2}\cdot AB^T||^2_F}\\
& s.t. \quad diag(W)=0\\
\end{split}
\end{equation*}} & 
\parbox{6cm}{\begin{equation*}
\begin{split}
\text{ADMM update } A, B
    \end{split}
\end{equation*}}\\ \cline{2-4} 
&{\text{5. LRR }\citep{JinKDD2021}}&

\parbox{8cm}{\begin{equation*} 
\begin{split}
\min_{rank(W)\le k}||X-XW||^2_F+||\Gamma W||^2_F\\
\end{split}
\end{equation*}} & 
\parbox{6cm}{\begin{equation*}
\begin{split}
\overline{Y}^* &= \overline{X}W^*\stackrel{\text{SVD}}{=}U\Sigma V\\
    \widehat{W} &= (X^TX+\Gamma^T\Gamma)^{-1}X^TX(V_kV^T_k)\\
    \end{split}
\end{equation*}}\\ \cline{2-4}

&{\text{6. LR-DLAE(this paper) }}&
\parbox{8cm}{\begin{equation*} 
\begin{split}
&\min_{rank(W)\le k}{||X-XW||^2_F+ ||\Lambda^{1/2}\cdot W||^2_F}\\
& \Lambda =\frac{p}{1-p}dMat(diag(X^TX))\\
\end{split}
\end{equation*}} & 
\parbox{6cm}{\begin{equation*}
\begin{split}
W^* &= (X^TX+\Lambda)^{-1}X^TX\\
\overline{Y}^* &= \overline{X}W^*\stackrel{\text{SVD}}{=}U\Sigma V^T\\
\widehat{W}&=W^*(V_kV^T_k)\\
    \end{split}
\end{equation*}}\\ \cline{2-4}

&{\text{7. LR-EDLAE-1(this paper) }}&
\parbox{8cm}{\begin{equation*} 
\begin{split}
&\min_{rank(W)\le k}{||X-XW||^2_F+ ||\Lambda^{1/2}\cdot W||^2_F}\\
&\Lambda =\frac{p}{1-p}dMat(diag(X^TX))\\
& s.t. \quad diag(W)=0\\
\end{split}
\end{equation*}} & 
\parbox{6cm}{\begin{equation*}
\begin{split}
C &= (X^TX+\Lambda)^{-1}\\
    W^* &= I-C\cdot dMat(diag(1\oslash C))\\
    \overline{Y}^* &= \overline{X}W^*\stackrel{\text{SVD}}{=}U\Sigma V^T\\
\widehat{W}&=W^*(V_kV^T_k)\\
    \end{split}
\end{equation*}}\\ \cline{2-4} 

&{\text{8. LR-EDLAE-2(this paper) }}&
\parbox{8cm}{\begin{equation*} 
\begin{split}
&\min_{rank(W)\le k}{||X-XW||^2_F+ ||\Lambda^{1/2}\cdot W||^2_F}\\
&\Lambda =\frac{p}{1-p}dMat(diag(X^TX))\\
& \text{s.t.} \quad diag(W)=0\\
\end{split}
\end{equation*}} & 
\parbox{6cm}{\begin{equation*}
\begin{split}
C &= (X^TX+\Lambda)^{-1}\\
    W^* &= I-C\cdot dMat(diag(1\oslash C))\\
W^*&\stackrel{\text{SVD}}{=}U\Sigma V^T\\
\widehat{W}&=U_k\Sigma_k V^T_k
    \end{split}
\end{equation*}}\\

\hline
\multirow{4}{*}{Nuclear Norm} & {\text{9. Regularized PCA }\citep{zheng2018regularized}}&
\parbox{8cm}{\begin{equation*} 
\begin{split}
&\min_{P, Q}{||X-PQ^T||^2_F+\lambda\cdot( ||P||^2_F + ||Q||^2_F)}\\
&X \stackrel{\text{SVD}}{=} U\Sigma V^T\\
\end{split}
\end{equation*}} & 
\parbox{6cm}{\begin{equation*}
\begin{split}
    P^* &= U_k\\
    Q^*&=V_k \Omega \\
    \Omega &= \sqrt{(\sigma_i-\lambda)_{+}}
    \end{split}
\end{equation*}}\\ \cline{2-4} 

&{\text{10. MF dropout }\citep{dropMF}}&
\parbox{8cm}{\begin{equation*} 
\begin{split}
&\min_{P, Q, d}{||X-PQ^T||^2_F+d\frac{1-p}{p}\cdot\sum\limits_{k=1}^d||P_k||^2_2 \cdot ||Q_k||^2_2}\\
&\min_{Y}{||X-Y||^2_F}+\frac{1-p}{p}||Y||^2_*\\
\end{split}
\end{equation*}} & 
\parbox{6cm}{\begin{equation*}
\begin{split}
X &\stackrel{\text{SVD}}{=} U\Sigma V^T\\
Y^* &= P^*\cdot (Q^*)^T\\
&=U\cdot S_\mu(\Sigma) \cdot V^T
    \end{split}
\end{equation*}}\\ \cline{2-4} 
&{\text{11. LAE }\citep{DBLP:conf/nips/BaoLSG20}}&

\parbox{8cm}{\begin{equation*} 
\begin{split}
\min_{W_1, W_2}\|X- X W_1 W_2\|^2_F + \|W_1\Lambda^{\frac 1 2}\|^2_F+ \|\Lambda^{\frac 1 2} W_2\|_F^2,   
\end{split}
\end{equation*}} & 
\parbox{6cm}{\begin{equation*}
\begin{split}
W_1^*&=P(I-\Lambda S^{-2})^{\frac{1}{2}}U^T\\
W_2^*&=U(I-\Lambda S^{-2})^{\frac{1}{2}}P^T
    \end{split}
\end{equation*}}\\\cline{2-4} 

&{\text{12. LVAE(this paper)}}&

\parbox{8cm}{\begin{equation*} 
\begin{split}
&\min_{P,Q}||X-PQ||^2_F+||\Lambda^{1/2}Q||^2_F+||P\Lambda^{1/2}||^2_F\\
&\min_{A,B}||X-XAB||^2_F+||\Lambda B||^2_F+||XA||^2_F\\
&\min_{rank(W)\le k}||X-W||^2_F+2||W||_{w,*}
\end{split}
\end{equation*}} & 
\parbox{8cm}{\begin{equation*}
\begin{split}
&X \stackrel{\text{SVD}}{=} U\Sigma V^T\\
P^*&=U_k \cdot diag(\sqrt{\sigma_1-\lambda_{(k)}},\dots, \sqrt{\sigma_1-\lambda_{(1)}})\cdot \Omega\\
Q^*&=\Omega^T\cdot diag(\sqrt{\sigma_1-\lambda_{(k)}},\dots, \sqrt{\sigma_1-\lambda_{(1)}})\cdot V_k^T\\
A^*&=X^\dagger P^*\Lambda^{\frac{1}{2}}\quad B^*=\Lambda^{-\frac{1}{2}}Q^*
    \end{split}
\end{equation*}}\\ 
\hline
\end{tabular}
}
\end{table}

\section{Proofs} 

\subsection{Proof of Lemma 1}

The Linear Variational AutoEncoder (LVAE) is defined in the same way as \citep{ppca_elbo}:

\begin{equation}
    \begin{split}
        p( {x} \mid  {z}) &=\mathcal{N}\left(W  {z}+\boldsymbol{\mu}, \sigma^{2} I\right)\\
        q( {z} \mid  {x})&=\mathcal{N}(V( {x}-\boldsymbol{\mu}), D)
    \end{split}
\end{equation}

For simplification, we set $\mu = 0$ in following context. And the ELBO of LVAE is known as:
\begin{equation}
\begin{split}
       &\mathcal{L}_x =  -KL(q(z|x)||p(z)) + \mathbb{E}_{q(z|x)} [\log p(x|z)]\\
       &KL(q(z|x)||p(z))=-\log | {D}| +  {x}^T {V}^T {V} {x} + tr( {D}) - k \\
       &\mathbb{E}_{q(z|x)}[\log p(x|z)]=-\frac{1}{2\sigma^2}\Big( tr( {WD} {W}^T) + {x}^T {V}^T {W}^T {W} {V} {x} -2 {x}^T {WVx} +  {x}^T {x}\Big) -\frac{n}{2}\log 2\pi\sigma^2
       \end{split}
\end{equation}

Again, the (maximizing) ELBO can be written as:

\begin{equation}
    \begin{split}
        \mathcal{L}_{ {x}} &= -\frac{1}{2}\Big(-\log | {D}| +  {x}^T {V}^T {V} {x} + tr( {D}) - k \Big) -\frac{n}{2}\log 2\pi\sigma^2\\
        &-\frac{1}{2\sigma^2}\Big( tr( {WD} {W}^T) + {x}^T {V}^T {W}^T {W} {V} {x} -2 {x}^T {WVx} +  {x}^T {x}\Big)\\
        &=-\frac{1}{2}|| {Vx}||^2_2-\frac{1}{2\sigma^2}\Big(|| {W}\sqrt{ {D}}||_F^2 +|| {x-WVx}||_2^2\Big) + f( {D},\sigma)
    \end{split}
\end{equation}
where $f( {D},\sigma) = \frac{1}{2}\log | {D}|- \frac{1}{2}tr( {D}) + \frac{k}{2} -\frac{n}{2}\log 2\pi\sigma^2$, $ {x}\in \mathbb{R}^n$ and $ {z}\in \mathbb{R}^k$.

For whole data, it is equivalent to minimize:
\begin{equation}\label{eq:lvae-james-1}
    \begin{split}
        \mathcal{L} &= || {X-WVX}||_F^2 + N|| {W}\sqrt{ {D}}||_F^2 + \sigma^2|| {VX}||^2_F + g( {D},\sigma)\\
        &= || {X}^T- {X}^T {V}^T {W}^T||_F^2 + N||\sqrt{ {D}} {W}^T||_F^2 + \sigma^2|| {X}^T {V}^T||^2_F + g( {D},\sigma)
    \end{split}
\end{equation}

where $g( {D},\sigma) = -\sigma^2N\big(\log | {D}|- tr( {D}) + k - n\log 2\pi\sigma^2)$. 

\subsection{Proof of Proposition 1}
Note that when $\sigma_{i} \leq \lambda_{(k - i)}$, the new singular value shrinks to zero, and can be removed. 
Basically, for any $\lambda_{(1)}  \geq \dots \geq \lambda_{(k)}$, we can build the corresponding Tikhonov regularized instance by setting
\begin{equation}
    \begin{split}
        \frac{\sigma^2_i}{\sigma^2_i + \lambda^\prime_i} =\frac{\sigma_i - \lambda_{(k - i)}}{\sigma_i}, {\mbox{i.e.,}}  
        \lambda^\prime_k= \frac{\sigma^3_i}{\sigma_i - \lambda_{(k - i)}} - \sigma^2_i.
    \end{split}
\end{equation}

\noindent{\bf Discussion of Proposition 1:}
Further, the same observation holds true for the regularization (\ref{eq:nonuniform}), and the weighted-nuclear norm regularization in when the weights are in the non-ascending order. 
This observation suggests a potentially  limitation of the earlier regularization as they will always try to maintain the larger singular values: when a singular value is large, the shrinkage will be small.Such regularization has shown to work well in the areas such as image processing~\cite{gu2014weighted}. But it has not been studied or confirmed if it will work for the recommendation. 
In Section 5, we report our experimental study which shows such regularization could be too restrictive for recommendation.

\subsection{Proof of Proposition~\ref{prop:diagonalcost}}
By slightly abusing the notation, we shall let $OPT1$ ($OPT2$) be the value of the optimal solution for $OPT1$ ($OPT2$). We need to show that $OPT1 = OPT2$. We need two directions. 

\mypara{$OPT2 \geq OPT1$:} Let $W^*$ be an optimal solution for $OPT2$. Let the SVD of $W^*$ be $U^*\Sigma^* (V^*)^{\transpose}$. Recall that $W^*$ needs to satisfy the rank constraint $\rank(W^*) \leq k$ so $U^* \in \reals^{m \times k}$, $\Sigma^* \in \reals^{k \times k}$, and $V^* \in \reals^{n \times k}$. Let $\pi$ be a permutation on $[k]$ such that $\lambda_{\pi(1)} \leq \lambda_{\pi(2)} \leq \dots \leq \lambda_{\pi(k)}$. Let also $\Omega$ be the corresponding permutation matrix. Specifically, $\Omega \in \{0, 1\}^{k \times k}$ and there is exactly one entry in each row of $\Omega$ is 1: 
\begin{align*}
    \Omega_{i,j} = \left\{
    \begin{array}{ll}
    1     & \mbox{if $j = \pi(i)$.}  \\
    0     & \mbox{otherwise.}
    \end{array}
    \right.
\end{align*}
For example, consider a case in which $\lambda_1 > \lambda_2 > \dots > \lambda_k$. Then we set $\pi = (k, k-1, \dots, 1)$, and correspondingly, 

\begin{align*}
    \Omega = \left(
    \begin{array}{cccc}
    0 & ... & 0 & 1\\
    0 & ... & 1 & 0 \\
    & ... & \\
    1 & ... & 0 & 0 
    \end{array}
    \right).
\end{align*}

Next, let $P = U^*(\Sigma^*)^{\frac 1 2}\Omega$ and $Q = \Omega^{\transpose} (\Sigma^*)^{\frac 1 2}(V^*)^{\transpose}$. We have $W^* = PQ$ and $f(W^*) = f(PQ)$. In addition, 
\begin{align*}
    \|\Lambda^{\frac 1 2}Q \|^2_F + \|P \Lambda^{\frac 1 2}\|^2_F = 2\|\Lambda^{\frac 1 2}\Omega (\Sigma^*)^{\frac 1 2}\|^2_F = 2 \sum_{i \leq k}\lambda_{\pi(i)}\sigma_i = 2\|W^*\|_{\omega, *}, 
\end{align*}
where $\sigma_i$ is the $i$-th largest singular value of $W^*$. In other words, we have found a $(P, Q)$ pair such that 
\begin{align*}
    f(PQ) + \|\Lambda^{\frac 1 2}Q\|^2_F + \|P \Lambda^{\frac 1 2}\|^2_F = f(W^*) + 2\|W^*\|_{\omega, *} = OPT2, 
\end{align*}
which shows that $OPT1 \leq OPT2$.

\mypara{$OPT2 \leq OPT1$.} Let $P^*$ and $Q^*$ be an optimal solution for $OPT1$. Let the singular values of $P^*$ be $\sigma_1(P^*) \geq \sigma_2(P^*) \geq \dots \geq \sigma_k(P^*)$ and those of $Q^*$ be $\sigma_1(Q^*) \geq \sigma_2(Q^*) \geq \dots \geq \sigma_k(Q^*)$. Let also $\sigma^*_1 \geq \dots \geq \sigma^*_k$ be the singular values of $P^*Q^*$. 

We shall find a lower bound of $\|\Lambda^{\frac 1 2}Q\|^2_F + \|P\Lambda^{\frac 1 2}\|^2_F$ expressed in terms of $\sigma^*_i$'s. In fact, we shall show that
\begin{equation}\label{eqn:nuclearinq}
    \|\Lambda^{\frac 1 2}Q\|^2_F + \|P\Lambda^{\frac 1 2}\|^2_F \geq 2\|P^*Q^*\|_{\omega, *}. 
\end{equation}
One can see that if Eq.~\ref{eqn:nuclearinq} were true, 
we have 
\begin{align*}
    OPT2 \leq f(P^*Q^*) + 2\|P^*Q^*\|_{\omega, *} \leq f(P^*Q^*) + \|\Lambda^{\frac 1 2}Q^*\|^2_F + \|P^* \Lambda^{\frac 1 2}\|^2_F = OPT1. 
\end{align*}
Thus, it remains to prove Eq.~\ref{eqn:nuclearinq}. Let $\lambda_{(1)} \geq \lambda_{(2)} \geq \dots \geq \lambda_{(k)}$ be a sorted sequence of $\lambda_i$'s i.e., $\lambda_{(k)} = \lambda_{\pi(1)}$, $\lambda_{(k - 1)} = \lambda_{\pi(2)}, \dots, \lambda_{(1)} = \lambda_{\pi(k)}$. 

First, we show that $\|P\Lambda^{\frac 1 2}\|^2_F \geq \sum_{i = 1}^k\lambda_{(k - i + 1)}\times \sigma^2_i(P^*)$ and  $\|\Lambda^{\frac 1 2}Q\|^2_F \geq \sum_{i = 1}^k\lambda_{(k - i + 1)}\times \sigma^2_i(Q^*)$. We need the following Lemma (see e.g., Theorem 2 in~\cite{yue2020matrix}): 

\begin{lemma}\label{lem:traceineq} Let $A$ and $B$ be two positive definite matrices in $\reals^{k \times k}$. Then it holds that 
\begin{equation}
    \sum_{i = 1}^k \sigma_i(A) \sigma_{k - i + 1}(B) \leq \trace(B^{\frac 1 2}A B^{\frac 1 2}). 
\end{equation}
\end{lemma}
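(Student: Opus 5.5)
We prove the inequality by diagonalizing both matrices and reducing it to a linear-programming statement about a doubly stochastic matrix, after which the rearrangement inequality finishes the argument. Since $A$ and $B$ are positive definite, their singular values coincide with their eigenvalues. Write $a_1 \geq \dots \geq a_k > 0$ for the eigenvalues of $A$ and $b_1 \geq \dots \geq b_k > 0$ for those of $B$. By cyclicity of the trace, $\trace(B^{\frac 1 2}AB^{\frac 1 2}) = \trace(AB)$, so it suffices to show $\trace(AB) \geq \sum_{i} a_i b_{k-i+1}$.

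Take spectral decompositions $A = U\,\diag(a)\,U^{\transpose}$ and $B = V\,\diag(b)\,V^{\transpose}$ with $U,V$ orthogonal, and set $O = U^{\transpose}V$, which is again orthogonal. Then
\[
\trace(AB) = \trace\bigl(\diag(a)\, O\, \diag(b)\, O^{\transpose}\bigr) = \sum_{i,j} a_i b_j O_{ij}^2 .
\]
The matrix $S$ with entries $S_{ij} = O_{ij}^2$ has nonnegative entries, and every row and column sums to one because $O$ is orthogonal. So $S$ is doubly stochastic, and $\trace(AB) = \sum_{i,j} a_i b_j S_{ij}$ is a linear function of $S$.

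By Birkhoff--von Neumann, the set of doubly stochastic matrices is the convex hull of the permutation matrices. A linear function on this polytope attains its minimum at a vertex, hence
\[
\trace(AB) \geq \min_{\tau} \sum_i a_i b_{\tau(i)}
\]
over permutations $\tau$ of $[k]$. The classical rearrangement inequality identifies the minimizing permutation as the one pairing the largest $a_i$ with the smallest $b_j$, namely $\tau(i) = k-i+1$. This gives exactly the claimed bound.

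The only step that needs any care is recognizing that the matrix of squared entries $O_{ij}^2$ is doubly stochastic, which is what turns the matrix inequality into a linear program over the Birkhoff polytope. Two alternatives avoid Birkhoff--von Neumann. One can apply Abel summation to $\sum_{i,j} a_i b_j S_{ij}$ using the partial-sum constraints on $S$. Or one can cite von Neumann's trace inequality applied to $-B$ shifted by a multiple of $I$. The assumption of positive definiteness is used only to identify singular values with eigenvalues; positive semidefiniteness would suffice.
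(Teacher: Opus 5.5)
Your proof is correct. It does not follow the paper's route, because the paper gives no argument for this lemma and simply imports it as Theorem~2 of \cite{yue2020matrix}. You prove it from first principles. You rewrite $\trace(B^{\frac 1 2}AB^{\frac 1 2})=\trace(AB)=\sum_{i,j}a_ib_jO_{ij}^2$ and note that the matrix $(O_{ij}^2)$ is doubly stochastic. A linear functional on the Birkhoff polytope is minimized at a permutation, and the scalar rearrangement inequality picks $\tau(i)=k-i+1$. Your von Neumann alternative, applied to $A$ and $cI-B$ with $c\geq b_1$, also works, since $\sigma_i(cI-B)=c-b_{k-i+1}$.

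The citation buys brevity. Your route buys self-containedness and a slightly stronger statement that the paper actually needs. Where the lemma is applied in the proof of Proposition~\ref{prop:diagonalcost}, $A=V_{P^*}^{\transpose}\Lambda V_{P^*}$ with only $\Lambda_{ii}\geq 0$, and $B=\Sigma_{P^*}^2$ may be singular (the optimal factors of Corollary~\ref{cor:closedform} can have zero singular values). So the positive-definite hypothesis is not literally met, while your argument covers the semidefinite case verbatim.

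Your argument also supplies the rigorous content behind the main-text remark about $\min_\Omega\|\Lambda^{\frac 1 2}\Omega\Sigma_W^{\frac 1 2}\|_F^2=\min_\Omega\sum_{i,j}\lambda_i\Omega_{ij}^2\sigma_j$. There, an arbitrary orthogonal $\Omega$ does no better than a permutation.

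Finally, you implicitly read ``positive definite'' as symmetric positive definite when you take orthogonal spectral decompositions. That is the standard reading, it holds in the paper's application, and it is genuinely needed. Take $k=2$ and $A=I+tJ$ with $J$ the $90^\circ$ rotation. Then $x^{\transpose}Ax=\|x\|^2>0$ and $\sigma_i(A)=\sqrt{1+t^2}$, yet $\trace(B^{\frac 1 2}AB^{\frac 1 2})=\trace(B)$, so the inequality fails for every $t\neq 0$.
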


Let the SVD of $P^*$ be $U_{P^*} \Sigma_{P^*} V^{\transpose}_{P^*}$ and that of $Q^*$ be $U_{Q^*} \Sigma_{Q^*} V^{\transpose}_{Q^*}$. We have 
\begin{equation}
    \|P^*\Lambda^{\frac 1 2}\|^2_F = \|U_{P^*} \Sigma_{P^*}V^{\transpose}_{P^*}\Lambda^{\frac 1 2} \|^2_F = \|\Sigma_{P^*}V^{\transpose}_{P^*}\Lambda^{\frac 1 2} \|^2_F = 
     \trace(\Sigma_{P^*}V^{\transpose}_{P^*}\Lambda V_{P^*}\Sigma_{P^*}). 
\end{equation}
We now apply Lemma~\ref{lem:traceineq} by setting $A = V^{\transpose}_{P^*}\Lambda V_{P^*}$ and $B = \Sigma^2_{P^*}$, and obtain that 
\begin{equation}
    \|P^*\Lambda^{\frac 1 2}\|^2_F  = \trace(\Sigma_{P^*}V^{\transpose}_{P^*}\Lambda V_{P^*}\Sigma_{P^*}) \geq \sum_{i = 1}^k\lambda_{(k + 1 - i)} \times \sigma^2_i(P^*). 
\end{equation}
We may similarly prove that $\|\Lambda^{\frac 1 2}Q^*\|^2_F \geq \sum_{i = 1}^k\lambda_{(k - i + 1)}\times \sigma^2_i(Q^*)$. Therefore, 
\begin{equation}
    \|\Lambda^{\frac 1 2}Q^*\|^2_F + \|P^*\Lambda^{\frac 1 2}\|^2_F \geq \sum_{i = 1}^k \lambda_{(k + 1 - i)}\times (\sigma^2_i(P^*) + \sigma^2_i(Q^*))
    \label{eqn:bound1}
\end{equation}

(\ref{eqn:bound1}) provides a lower bound of $\|\Lambda^{\frac 1 2} Q^*\|^2_F + \|P^*\Lambda^{\frac 1 2}\|^2_F$ in terms of $\sigma_i(P^*)$ and $\sigma_i(Q^*)$. We next aim to express the lower bound in terms of $\sigma^*_i$'s (singular values of $P^*Q^*$) directly. 

The following program gives a lower bound for $\|\Lambda^{\frac 1 2} Q^*\|^2_F + \|P^*\Lambda^{\frac 1 2}\|^2_F$: 
\begin{align}
    \min: \quad & \|\Lambda^{\frac 1 2} Q^*\|^2_F + \|P^*\Lambda^{\frac 1 2}\|^2_F \label{eqn:minpq} \\
    \mbox{subject to} \quad & W = P^*Q^* \nonumber\\
    & \sigma_i(W) = \sigma^*_i \quad {\mbox{for $i \leq k$}}. \nonumber
\end{align}
Write the SVD of $W$ be $U_W\Sigma_W V^{\transpose}_W$. Also, let $\tilde P = U^{\transpose}_W P^*$ and $\tilde Q = Q^* V_W$. 
Noting that the columns in $P^*$ are in the column space of $W$ and the rows in $Q^*$ are in the row space of $W$, we have \emph{(i)} $\sigma_i(P^*) = \sigma_i(\tilde P)$ and $\sigma_i(Q^*) = \sigma_i(\tilde Q)$ for $i \leq k$, and \emph{(ii)}
$\|\Lambda^{\frac 1 2} Q^*\|^2_F + \|P^*\Lambda^{\frac 1 2}\|^2_F = \|\Lambda^{\frac 1 2} \tilde Q\|^2_F + \|\tilde P\Lambda^{\frac 1 2}\|^2_F$. 

Therefore, (\ref{eqn:minpq}) can be equivalently written as 
\begin{align}
    \min: \quad & \|\Lambda^{\frac 1 2} \tilde Q\|^2_F + \|\tilde P\Lambda^{\frac 1 2}\|^2_F \label{eqn:minpqe} \\
    \mbox{subject to} \quad & \Sigma_W = \tilde P\tilde Q \nonumber\\
    & (\Sigma_{W})_{i,i} = \sigma^*_i \quad {\mbox{for $i \leq k$}}. \nonumber
\end{align}
Now $\tilde P\tilde Q$ is positive definite. Using a similar technique developed in~\citep{DBLP:conf/nips/BaoLSG20} (Theorem 1), one can see that $\tilde P = \tilde Q^{\transpose}$. See also Lemma~\ref{lem:lastminpproof}. This implies that $\tilde P = \Sigma^{\frac 1 2}_W\Omega$ for some unitary matrix $\Omega$ and $\sigma_i(P^*) = \sigma_i(\tilde P) = \sigma_i(Q^*) = \sigma_i(\tilde Q) = \sqrt{\sigma^*_i}$ for $i \leq k$. Together with (\ref{eqn:bound1}), we have
\begin{align*}
      \|\Lambda^{\frac 1 2}Q^*\|^2_F + \|P^*\Lambda^{\frac 1 2}\|^2_F \geq 2 \sum_{i \leq k}\lambda_{(k + 1 - i)} \times \sigma^2_i(P^*) = 2 \sum_{i \leq k}\lambda_{(k + 1 - i)} \times \sigma^*_i= 2\|P^*Q^*\|_{\omega, *}. 
\end{align*}

\subsection{Proof of Corollary~\ref{cor:closedform}}

We first find an optimal solution for (\ref{eqn:factorlambda}). Let the SVD of $X$ be $X = U_X\Sigma_X V^{\transpose}_X$, where $U_X \in \reals^{m \times n}$, $\Sigma_X \in \reals^{n \times n}$, and $V_X \in \reals^{n \times n}$. Let $\bar U_X$ be an arbitrary basis for the subspace that is orthogonal to $X$'s column space so $\bar U_X \in \reals^{m \times (m - n)}$ and $[U_X, \bar U_X]$ form a basis for $\reals^{m}$. We have 
\begin{align*}
    & \|X - PQ\|^2_F + \|P \Lambda^{\frac 1 2}\|^2_F + \|\Lambda^{\frac 1 2}Q\|^2_F \\
=   &  \left\|
    \left(\begin{array}{c}
         U^{\transpose}_X  \\
         \bar U^{\transpose}_X 
    \end{array}\right) X V_X - \left(\begin{array}{c}
         U^{\transpose}_X  \\
         \bar U^{\transpose}_X 
    \end{array}\right) P Q V_X
    \right\|^2_F + \left\|\left(\begin{array}{c}
         U^{\transpose}_X  \\
         \bar U^{\transpose}_X 
    \end{array}\right)P\Lambda^{\frac 1 2}\right\|^2_F + \|\Lambda^{\frac 1 2}Q V_X\|^2_F. 
\end{align*}

Let $\tilde P = \left(\begin{array}{c}
         U^{\transpose}_X  \\
         \bar U^{\transpose}_X 
    \end{array}\right) P$ and $\tilde Q = QV_X$. Then our objective becomes 
    
\begin{align}
    \min_{\tilde P, \tilde Q} \left\|\left(
    \begin{array}{c}
    \Sigma_X \\
    0_{(m - n) \times n}
    \end{array}
    \right) - \tilde P \tilde Q\right\|^2_F + \|\tilde P \Lambda^{\frac 1 2}\|^2_F + \|\Lambda^{\frac 1 2}\tilde Q\|^2_F. 
\end{align}
    
Let $\tilde W = \tilde P \tilde Q$ and the singular values of $\tilde W$ be $\sigma^*_1 \geq \sigma^*_2 \geq \dots \geq \sigma^*_k$. Let also $\tilde \Sigma = 
\left(
\begin{array}{c}
     \Sigma_X  \\
      0_{(m - n) \times n}
\end{array}
\right)$. 

Recall also that $\sigma_i$ is the $i$-th largest singular value of $X$. We next show that 
\begin{align*}
    \left\|\left(
    \begin{array}{c}
    \Sigma_X \\
    0
    \end{array}
    \right) - \tilde P \tilde Q\right\|^2_F = \|\tilde \Sigma - \tilde P \tilde Q \|^2_F \geq \sum_{i = 1}^k(\sigma_i - \sigma^*_i)^2 + \sum_{i = k + 1}^n \sigma^2_i. 
\end{align*}

Note first that 
\begin{align}
    \|\tilde \Sigma - \tilde W\|^2_F = \|\tilde \Sigma\|^2_F + \|\tilde W\|^2_F - 2 \langle \tilde \Sigma, \tilde W\rangle. \label{eqn:intermediate}
\end{align}
Next, we have (\citep{zheng2018regularized}): 
\begin{align*}
    |\langle \tilde \Sigma, \tilde W\rangle | = | \trace(\tilde \Sigma \tilde W^{\transpose})\| 
    \leq |\trace(\tilde \Sigma \Sigma_{\tilde W})| = \sum_{i = 1}^k\sigma_i \sigma^*_i. 
\end{align*}
Therefore, $\langle \tilde \Sigma, \tilde W\rangle$ is maximized when 
\begin{align*}
    \tilde W_{i, j} = \left\{
    \begin{array}{cc}
    \sigma^*_i     &  \mbox{if $i = j \leq k$} \\
    0 &  \mbox{Otherwise.}
    \end{array}
    \right.
\end{align*}
When we plug in this optimized $\tilde W$ to Eq.~\ref{eqn:intermediate}, we get 
\begin{align*}
    \|\tilde \Sigma - \tilde W\|^2_F \geq \sum_{i = 1}^k(\sigma_i - \sigma^*_i)^2 + \sum_{i = k + 1}^n \sigma^2_i. 
\end{align*}

Next, from Proposition~\ref{prop:diagonalcost}, we have 
\begin{align*}
    \|\tilde P V^{\frac 1 2}\|^2_F + \|\Lambda^{\frac 1 2}\tilde Q\|^2_F \geq \sum_{i = 1}^k \lambda_{(k - i + 1)}\sigma^*_i. 
\end{align*}

Therefore, we can find a lower bound for Eq.~\ref{eqn:optclosedform} in terms of $\sigma^*_i$'s: 

\begin{align}
    \calL(\sigma^*_1, \dots, \sigma^*_k) = \sum_{i = 1}^k (\sigma_i - \sigma^*_i)^2 + 2 \sum_{i = 1}^k \lambda_{(k - i + 1)}\sigma^*_i + \sum_{i = k + 1}^m\sigma^2_i \quad (\sigma^*_1 \geq \dots \geq \sigma^*_k \geq 0). 
\end{align}

We next find a minimal value of $\calL$ (by treating $\sigma^*_i$'s as decision variables). This will give us a lower bound (and is independent of $\sigma^*_i$) on our optimization problem. We then show that this lower bound can be achieved by carefully constructing $\tilde W$ (as well as $\tilde P$ and $\tilde Q$). This means such $\tilde W$ is optimal. 

Specifically, we need to find an optimal solution for the following program:

\begin{align}
    \mbox{minimize}_{\sigma^*_1, \dots, \sigma^*_k} \quad & \calL(\sigma^*_1, \dots, \sigma^*_k) \label{eqn:constraintsigma}\\
    \mbox{subject to: } \quad  & \sigma^*_i \geq 0  \nonumber \\
    & \sigma^*_1 \leq \sigma^*_2 \leq \dots \leq \sigma^*_k \quad {\mbox{(Ordering constraint)}} \nonumber 
\end{align}

We shall first find an optimal solution for

\begin{align}
    \mbox{minimize}_{\sigma^*_1, \dots, \sigma^*_k} \quad & \calL(\sigma^*_1, \dots, \sigma^*_k) \label{eqn:constraintsigma_droporder}\\
    \mbox{subject to: } \quad  & \sigma^*_i \geq 0  \nonumber 
\end{align}

Note here, the ordering constraint is removed so the optimal value for (\ref{eqn:constraintsigma_droporder}) should be no more than that for (\ref{eqn:constraintsigma}). We shall see that the optimal solution for (\ref{eqn:constraintsigma}) also satisfies the ordering constraint so indeed optimal solutions for (\ref{eqn:constraintsigma}) and (\ref{eqn:constraintsigma_droporder}) are the same. 

The problem (\ref{eqn:constraintsigma_droporder}) boils down to finding 
\begin{align*}
    \min_{\sigma^*_i \geq 0} (\sigma_i - \sigma^*_i)^2 + 2\sum_{i = 1}^k \lambda_{(k - i + 1)}\sigma^*_i. 
\end{align*}

We note that $\sigma^*_i$'s do not interact with each other so we can optimize each $\sigma^*_i$'s independently. We get 
\begin{align*}
    \sigma^*_i = (\sigma_i - \lambda_{(k - i + 1)})^+. 
\end{align*}
We can check that $\sigma^*_1 \geq \dots \geq \sigma^*_k$. Therefore, the optimal value for (\ref{eqn:constraintsigma}) is 
\begin{align*}
    \sum_{i = 1}^k (\sigma_i - (\sigma_i - \lambda_{(k  -  i +1)})^+)^2 + 2 \sum_{i = 1}^k \lambda_{(k - i + 1)}(\sigma_i - \lambda_{(k - i + 1)})^+ + \sum_{i = k + 1}^n\sigma^2_i. 
\end{align*}
This is also a lower bound for (\ref{eqn:factorlambda}). One can check that when we set $P$ and $Q$ as 

\begin{align}
    P^* & = U_k diag(\sqrt{(\sigma_1 - \lambda_{(k)})^+}), \dots, \sqrt{(\sigma_k - \lambda_{(1)})^+}) \Omega,\label{eqn:optimalpq}  \\
    Q^* & = \Omega^{\transpose}diag(\sqrt{(\sigma_1 - \lambda_{(k)})^+}), \dots, \sqrt{(\sigma_k - \lambda_{(1)})^+}) V^{\transpose}_k, \nonumber
\end{align}

the lower bound is achieved so (\ref{eqn:optimalpq}) gives an optimal solution. Here, $U_k$ and $V_k$ are leading left and right singular vectors of $X$. 

Now we move to analyze (\ref{eqn:factorlambdaab}). Our goal is to reduce (\ref{eqn:factorlambdaab}) to (\ref{eqn:factorlambda}). Let 
\begin{align*}
    P = X A \Lambda^{-\frac 1 2} \quad \quad \quad Q = \Lambda^{\frac 1 2}B. 
\end{align*}
Then (\ref{eqn:factorlambdaab}) becomes 
\begin{align}
    \mbox{minimize}_{P, Q} \quad & \|X - PQ\|^2_F + \|P \Lambda^{\frac 1 2}\|^2_F + \|\Lambda^{\frac 1 2}Q\|^2_F
     \label{eqn:solveoptimalab}\\
    \mbox{subject to} \quad & P = X A\Lambda^{-\frac 1 2} \quad \mbox{(Constraint P)} \nonumber \\
    & Q = \Lambda^{\frac 1 2}B \quad \mbox{(Constraint Q)}. \nonumber 
\end{align}
Here, $X$ and $\Lambda$ are given, whereas $P$, $Q$, $A$, and $B$ are decision variables. The (Constraint P) says that each column of $P$ needs to be in a column space of $X$ (it is a necessary and sufficient condition for $A$ to exist).  The (Constraint Q) simply says $Q$ and $B$ are linearly related and does not have tangible impact to the optimization problem. 

But we note that when we put aside the constraints, an optimal $(P, Q)$ is specified by (\ref{eqn:optimalpq}). The columns of the optimal $P$ indeed is in the column space of $X$. So $(P, Q)$ is also an optimal solution for (\ref{eqn:solveoptimalab}). We may find the corresponding $A$ and $B$: 

\begin{align*}
A^* = X^{\dagger}P^* \Lambda^{\frac 1 2} \quad \mbox{ and } B^* = \Lambda^{- \frac 1 2}Q^*, 
\end{align*}

\subsection{Symmetric lemma}

\begin{lemma}\label{lem:lastminpproof} Let $\tilde P, \tilde Q \in \reals^{k \times k}$ be full rank, $\Lambda$ be a diagonal matrix, and $\Sigma_{W}$ be a diagonal matrix so that $(\Sigma_W)_{i,i} = \sigma^*_i$, where $\sigma^*_i$'s are sorted in descending order. Consider the optimization problem:
\begin{align}
    \min: \quad & \|\Lambda^{\frac 1 2} \tilde Q\|^2_F + \|\tilde P\Lambda^{\frac 1 2}\|^2_F \label{eqn:minpqere} \\
    \mbox{subject to} \quad & \Sigma_W = \tilde P\tilde Q \nonumber\\
    & (\Sigma_{W})_{i,i} = \sigma^*_i \quad {\mbox{for $i \leq k$}}. \nonumber
\end{align}
There is an optimal solution such that $\tilde P = \tilde Q^{\transpose}$
\end{lemma}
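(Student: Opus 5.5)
The plan is to identify the optimal value of (\ref{eqn:minpqere}) explicitly, namely $2\sum_{i\le k}\lambda_{(k+1-i)}\sigma^*_i$, and to exhibit a minimizer of the form $\tilde P=\tilde Q^{\transpose}$ attaining it. I will first assume all $\lambda_i>0$ and pairwise distinct, and recover the general case at the end by continuity.

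\emph{Upper bound.} Let $\Omega$ be the permutation matrix of $\pi$, with $\lambda_{\pi(1)}\le\dots\le\lambda_{\pi(k)}$, and set $\tilde P=\Sigma_W^{\frac12}\Omega$ and $\tilde Q=\Omega^{\transpose}\Sigma_W^{\frac12}=\tilde P^{\transpose}$. Then $\tilde P\tilde Q=\Sigma_W$, so this pair is feasible. As in the computation preceding Prop.~\ref{prop:diagonalcost},
\[
\|\Lambda^{\frac12}\tilde Q\|^2_F+\|\tilde P\Lambda^{\frac12}\|^2_F=2\sum_{i\le k}\lambda_{\pi(i)}\sigma^*_i .
\]
Hence it suffices to show that no feasible pair does better.

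\emph{Lower bound via stationarity.} Since $\Lambda\succ0$, the objective is coercive on the feasible set $\{\tilde Q=\tilde P^{-1}\Sigma_W\}$: it blows up as $\tilde P$ degenerates or grows. So a minimizer exists. I will write the first-order conditions with a multiplier $\Gamma$ for the constraint $\tilde P\tilde Q=\Sigma_W$:
\[
\tilde P\Lambda=\Gamma\tilde Q^{\transpose},\qquad \Lambda\tilde Q=\tilde P^{\transpose}\Gamma .
\]
Multiplying the first on the left by $\tilde P^{\transpose}$ and the second on the right by $\tilde Q^{\transpose}$ gives
\[
G\Lambda=\tilde P^{\transpose}\Gamma\tilde Q^{\transpose}=\Lambda H,\qquad G=\tilde P^{\transpose}\tilde P,\quad H=\tilde Q\tilde Q^{\transpose}.
\]
Transposing also gives $\Lambda G=H\Lambda$. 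Comparing entries, these two identities say $G_{ij}\lambda_j=\lambda_iH_{ij}$ and $\lambda_iG_{ij}=H_{ij}\lambda_j$. Together these give $G_{ij}(\lambda_j^2-\lambda_i^2)=0$. Since the $\lambda_i$ are distinct, $G$ and $H$ are diagonal and $G=H$.

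From $\tilde P^{\transpose}\tilde P=\tilde Q\tilde Q^{\transpose}$ and $\tilde P\tilde Q=\Sigma_W$, polar decomposition gives $\tilde Q^{\transpose}=O\tilde P$ for some orthogonal $O$. Substituting, $\Sigma_W=\tilde P\tilde P^{\transpose}O^{\transpose}$. Since $\Sigma_W$ and $\tilde P\tilde P^{\transpose}$ are both positive definite, uniqueness of the polar decomposition forces $O=I$. Hence $\tilde Q=\tilde P^{\transpose}$ at every stationary point, and in particular at the minimizer.

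\emph{Identifying the value.} With $\tilde Q=\tilde P^{\transpose}$ and $\tilde P\tilde P^{\transpose}=\Sigma_W$, we may write $\tilde P=\Sigma_W^{\frac12}\Omega'$ with $\Omega'$ orthogonal. The objective is then $2\,\trace(\Omega'\Lambda\Omega'^{\transpose}\Sigma_W)$. By the matrix rearrangement inequality (Lemma~\ref{lem:traceineq}, or von Neumann's trace inequality), this is at least $2\sum_i\lambda_{(k+1-i)}\sigma^*_i$, and the permutation choice above attains that bound.

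\emph{Degenerate $\Lambda$.} If some $\lambda_i$ coincide or vanish, I perturb $\Lambda$ to $\Lambda_\epsilon$ with distinct positive entries. The optimal value of (\ref{eqn:minpqere}) is continuous in $\Lambda$: it is an infimum of functions affine in $\Lambda$, hence concave and upper semicontinuous, and a lower bound uniform in $\epsilon$ comes from the rearrangement bound. The explicit symmetric solution $\Sigma_W^{\frac12}\Omega$ varies only through the choice of permutation. So the formula and the symmetric optimizer persist in the limit.

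The step I expect to be the main obstacle is the passage from the stationarity conditions to $\tilde Q=\tilde P^{\transpose}$. The entrywise argument needs distinct $\lambda_i$. The polar-decomposition step needs positive definiteness of $\Sigma_W$, which holds because $\tilde P,\tilde Q$ are full rank. Existence of a minimizer needs $\Lambda\succ0$. If the stationarity route gets messy, the fallback is to write $\tilde Q=\tilde P^{-1}\Sigma_W$, minimize over the Gram matrix $G=\tilde P^{\transpose}\tilde P$ directly, and use convexity of $G\mapsto\trace(\Lambda G)+\trace(\Lambda\Sigma_W G^{-1}\Sigma_W)$ on the positive definite cone. The degenerate-$\Lambda$ limit is routine by comparison.
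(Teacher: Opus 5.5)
Your proof is correct, and it takes a genuinely different route from the paper's.

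\textbf{The paper's route.} The paper substitutes $\hat P=\tilde P\Lambda^{1/2}$ and $\hat Q=\Lambda^{-1/2}\tilde Q$, and writes $\hat Q=U\Sigma_{\hat Q}V^{\transpose}$. The objective then splits as $\|\Lambda U\Sigma_{\hat Q}\|_F^2+\|\Sigma_W V\Sigma_{\hat Q}^{-1}\|_F^2$, where one term depends only on $U$ and the other only on $V$. The rearrangement inequality then says the optimal $U,V$ are permutations. Hence $\tilde P,\tilde Q$ are monomial matrices with matching supports. The proof finishes with a termwise AM--GM bound $\lambda(a^2+b^2)\ge 2\lambda ab$ plus rearrangement, and $\tilde P=\tilde Q^{\transpose}$ turns every inequality into an equality.

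\textbf{What each approach buys.} Read as a lower bound valid for every feasible pair, the paper's argument gives the optimal value and a symmetric optimizer in one stroke. It needs no existence argument, no multipliers, and no distinctness of the $\lambda_i$. However, it presupposes $\Lambda\succ 0$ (for $\Lambda^{-1/2}$), and its ``stationary $U,V$ must be permutations'' step is only sketched. Your variational route uses coercivity for existence, then the identities $G\Lambda=\Lambda H$ and $\Lambda G=H\Lambda$ to force $G=H$ diagonal, then uniqueness of the polar decomposition of $\Sigma_W$. It gives a stronger structural fact in the generic case: every critical point, hence every minimizer, is symmetric. The cost is the distinct-positive-weights assumption and a limiting step, but that step also covers zero weights, which the paper's substitution cannot.

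\textbf{Two small remarks.} First, in the limit step you only need upper semicontinuity of the optimal value $v(\Lambda)$. Indeed $v(\Lambda)\ge\limsup_{\epsilon} v(\Lambda_\epsilon)=2\sum_i\lambda_{(k+1-i)}\sigma^*_i$, and the explicit pair $\Sigma_W^{1/2}\Omega$ gives the reverse inequality, so the appeal to continuity and a ``uniform lower bound'' is unnecessary. Second, positive definiteness of $\Sigma_W$ does not follow from full rank alone. It comes from the $\sigma^*_i$ being singular values (nonnegative) together with invertibility; with a negative diagonal entry, no symmetric pair would even be feasible.
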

\begin{proof} Let $\hat P = \tilde P \Lambda^{\frac 1 2}$ and $\hat Q = \Lambda^{-\frac 1 2}\tilde Q$. The program (\ref{eqn:minpqere}) is equivalent to 

\begin{align}
    \min: \quad & \|\Lambda \hat Q\|^2_F + \|\hat P\|^2_F \label{eqn:minpqere} \\
    \mbox{subject to} \quad & \Sigma_W = \hat P\hat Q \nonumber\\
    & (\Sigma_{W})_{i,i} = \sigma^*_i \quad {\mbox{for $i \leq k$}}. \nonumber
\end{align}
Let the SVD of $\hat Q$ be $U_{\hat Q}\Sigma_{\hat Q}V^{\transpose}_{\hat Q}$ so $\hat Q^{-1} = V_{\hat Q}\Sigma^{-1}_{\hat Q}U^{\transpose}_{\hat Q}$. We can also see that $\hat P =\Sigma_W \hat Q^{-1}$. Therefore, the objective term becomes
\begin{align*}
    \|\Lambda U_{\hat Q}\Sigma_{\hat Q}V^{\transpose}_{\hat Q}\|^2_F + \|\Sigma_{W}V_{\hat Q} \Sigma^{-1}_{\hat Q}U^{\transpose}_{\hat Q}\|^2_F = \|\Lambda U_{\hat Q}\Sigma_{\hat Q}\|^2_F + \|\Sigma_W V_{\hat Q} \Sigma^{-1}_{\hat Q}\|^2_F. 
\end{align*}

Let us consider the stationary points $U_{\hat Q}$ and $V_{\hat Q}$ when $\Sigma_{\hat Q}$ is fixed. We can see that they need to be permutation matrices to minimize both terms in the objective (using the rearrangement inequality again). Therefore, we can see $\hat Q = \Sigma_1 \Sigma_{\hat Q}\Sigma_2$ for two permutation matrices $\Sigma_1$ and $\Sigma_2$. This implies that $\tilde Q = \Lambda^{\frac 1 2}\Sigma_1 \Sigma_{\hat Q}\Sigma_2$, i.e., each row (column) of $\tilde Q$ has exactly one non-zero entry. We may similarly show that each row (column) of $\tilde P$ has exactly one non-zero entry. In addition, the locations of non-zero entries of $\tilde P$ and $\tilde Q^{\transpose}$ are identical because $\tilde P \tilde Q$ is a diagonal matrix. We may thus write
\begin{align*}
    \tilde P = \Sigma_{(1)} \Sigma_{\tilde P}\Sigma_{(2)} \quad \tilde Q = \Sigma^{\transpose}_{(2)} \Sigma_{(\tilde Q)}\Sigma^{\transpose}_{(1)}, 
\end{align*}
where $(\Sigma_{\tilde P})_{i,i} = \sigma_i(\tilde P)$ and $(\Sigma_{(\tilde Q)})_{i,i} = \sigma_{\tau(i)}(\tilde Q)$, where $\tau$ is a permutation on $[k]$. The set of (possibly unsorted) singular values for $\tilde P \tilde Q$ thus is $\sigma_i(\tilde P) \sigma_{\tau(i)}(\tilde Q)$. 
Thus, we can see that there exists a permutation $\bar \pi$ such that 
\begin{align*}
    \|\Lambda^{\frac 1 2}\tilde Q\|^2_F +  \|\tilde P\Lambda^{\frac 1 2}\|^2_F & = \sum_{i \leq k}\big(\sigma^2_i(P^*)\lambda_{\bar \pi(i)} + \sigma^2_{\tau(i)}(Q^*)\lambda_{\bar \pi(i)} \big)\\
    & \geq \sum_{i \leq k}2 \sigma_i(P^*)\sigma^*_{\tau(i)}(Q) \lambda_{\bar \pi(i)} \\
    & \geq 2 \|P^*Q^*\|_{\omega, *}. 
\end{align*}
One can see that we can set $\tilde P = \tilde Q^{\transpose}$ to make all inequality becomes equality so there is an optimal solution such that $\tilde P = \tilde Q^{\transpose}$. 
\end{proof}

\section{Experimental Details}

\subsection{DLAE hyperparameters tuning}

This section presents the hyperparameter tuning process on the validation data over three (ML-20M, Netflix, MSD) datasets for the full rank DLAE formula, which was introduced by \citet{DBLP:conf/nips/Steck20} yet not investigated:
\begin{equation*}
    \begin{split}
        &\min_W{||X-XW||^2_F+ ||\Lambda^{1/2}\cdot W||^2_F}\\
& \Lambda =\frac{p}{1-p}dMat(diag(X^TX))\\
&\hat{W} = (X^TX+\Lambda)^{-1}X^TX\\
    \end{split}
\end{equation*}
In practical, $l_2$ regularization is also imposed:
\begin{equation*}
    \begin{split}
\hat{W} = (X^TX+\Lambda + \lambda)^{-1}X^TX\\
    \end{split}
\end{equation*}
The \cref{tab:dlae_full_ml20m,tab:dlae_full_netflix,tab:dlae_full_msd} show the results of $nDCG@100$ over three datasets respectively. And the optimal parameters are highlighted.

\begin{table}[h]
\centering
\caption{ml-20m, DLAE full rank, parameter tuning on validation dataset by $nDCG@100$}
\label{tab:dlae_full_ml20m}
\begin{small}
\begin{tabular}{|c|c|c|c|c|c|c|c|}
\hline
\multicolumn{2}{|l|}{\multirow{2}{*}{}} & \multicolumn{6}{c|}{$\lambda$}                                     \\ \cline{3-8} 
\multicolumn{2}{|l|}{}                  & 800     & 900     & \textbf{1000}    & 1100    & 1200    & 1300    \\ \hline
\multirow{5}{*}{p}    & 0.1             & 0.42024 & 0.42063 & 0.42073          & 0.42102 & 0.42131 & 0.4212  \\ \cline{2-8} 
                      & 0.2             & 0.43132 & 0.43139 & 0.43154          & 0.4314  & 0.43147 & 0.43136 \\ \cline{2-8} 
                      & \textbf{0.3}    & 0.43203 & 0.43211 & \textbf{0.43214} & 0.43206 & 0.43203 & 0.43196 \\ \cline{2-8} 
                      & 0.4             & 0.43001 & 0.43001 & 0.42995          & 0.42996 & 0.42984 & 0.42978 \\ \cline{2-8} 
                      & 0.5             & 0.42754 & 0.42745 & 0.42729          & 0.42718 & 0.42715 & 0.42704 \\ \hline
\end{tabular}%
\end{small}
\end{table}

\begin{table}[h]
\centering
\caption{netflix, DLAE full rank, parameter tuning on validation dataset by $nDCG@100$}
\label{tab:dlae_full_netflix}
\begin{small}
\begin{tabular}{|c|c|c|c|c|c|c|c|c|}
\hline
\multicolumn{2}{|l|}{\multirow{2}{*}{}} & \multicolumn{7}{c|}{$\lambda$}                                               \\ \cline{3-9} 
\multicolumn{2}{|l|}{}                  & 800     & 900     & 1000    & \textbf{1100}    & 1200    & 1300    & 1400    \\ \hline
\multirow{7}{*}{p}    & 0.2             & 0.3904  & 0.3904  & 0.39027 & 0.39024          & 0.3902  & 0.3903  & 0.39018 \\ \cline{2-9} 
                      & 0.25            & 0.39247 & 0.39252 & 0.39248 & 0.39249          & 0.3925  & 0.3925  & 0.39256 \\ \cline{2-9} 
                      & 0.3             & 0.39359 & 0.39359 & 0.39366 & 0.39358          & 0.39362 & 0.39368 & 0.39369 \\ \cline{2-9} 
                      & \textbf{0.35}   & 0.39402 & 0.39403 & 0.394   & \textbf{0.39405} & 0.39399 & 0.39403 & 0.39397 \\ \cline{2-9} 
                      & 0.4             & 0.39399 & 0.39393 & 0.39395 & 0.39393          & 0.39389 & 0.39388 & 0.39387 \\ \cline{2-9} 
                      & 0.45            & 0.39346 & 0.3935  & 0.39343 & 0.39344          & 0.39338 & 0.3933  & 0.39329 \\ \cline{2-9} 
                      & 0.5             & 0.39249 & 0.39241 & 0.39247 & 0.39241          & 0.39242 & 0.3923  & 0.39224 \\ \hline
\end{tabular}%
\end{small}
\end{table}

\begin{table}[]
\centering
\caption{msd, DLAE full rank, parameter tuning on validation dataset by $nDCG@100$}
\label{tab:dlae_full_msd}
\begin{small}
\begin{tabular}{|c|c|c|c|c|c|c|c|}
\hline
\multicolumn{2}{|l|}{\multirow{2}{*}{}} & \multicolumn{6}{c|}{$\lambda$}                                     \\ \cline{3-8} 
\multicolumn{2}{|l|}{}                  & 10      & 20      & \textbf{30}      & 40      & 50      & 60      \\ \hline
\multirow{4}{*}{p}    & 0.3             & 0.38514 & 0.38515 & 0.38517          & 0.38505 & 0.38492 & 0.38474 \\ \cline{2-8} 
                      & \textbf{0.4}    & 0.38596 & 0.38599 & \textbf{0.38602} & 0.386   & 0.38597 & 0.38592 \\ \cline{2-8} 
                      & 0.5             & 0.38556 & 0.38555 & 0.38553          & 0.38557 & 0.38553 & 0.38549 \\ \cline{2-8} 
                      & 0.6             & 0.38382 & 0.3838  & 0.38381          & 0.38374 & 0.38373 & 0.38366 \\ \hline
\end{tabular}%
\end{small}
\end{table}

\subsection{Matrix Factorization with Dropout Hyperparameters Tuning}
\citet{dropMF} shows that optimization with dropout (allowing rank optimizing) is equivalent to solving a matrix approximation problem with nuclear norm:
\begin{equation*} 
\begin{split}
&\min_{P, Q, d}{||X-PQ^T||^2_F+d\frac{1-p}{p}\cdot\sum\limits_{k=1}^d||P_k||^2_2 \cdot ||Q_k||^2_2}\\
&\min_{Y}{||X-Y||^2_F}+\frac{1-p}{p}||Y||^2_*\\
\end{split}
\end{equation*}

and the solution is given by:
\begin{equation*}
    \begin{split}
        X &\stackrel{\text{SVD}}{=} U\Sigma V^T\\
Y^* &= P^*\cdot (Q^*)^T\\
&=U\cdot S_\mu(\Sigma) \cdot V^T\\
S_\mu(\sigma)&=\max(\sigma - \mu, 0)\\
\mu &=\frac{1-p}{p+(1-p)\bar{d}}\sum\limits_{i=1}^{\bar{d}}{\sigma_i(X)}
    \end{split}
\end{equation*}
where $\bar{d}$ denotes the largest integer such that:
\begin{equation*}
    \sigma_{\bar{d}}(X)>\frac{1-p}{p+(1-p)\bar{d}}\sum\limits_{i=1}^{\bar{d}}{\sigma_i(X)}
\end{equation*}
Hence, there is only one parameter $p$ to tuning. We present the tuning process on the validation set below, see \cref{tab:mf_drop_ml20m,tab:mf_drop_netflix,tab:mf_drop_msd}. Optimal parameters as well as induced rank $\bar{d}$ are highlighted.

\begin{table}[h]
\centering
\caption{ml-20m, matrix factorization with dropout, hyper parameter tuning by $nDCG@100$ on validation dataset and its induced rank .}
\label{tab:mf_drop_ml20m}
\begin{small}
\begin{tabular}{|c|c|c|c|c|c|}
\hline
p        & 0.9     & 0.99    & 0.995   & \textbf{0.996}   & 0.997   \\ \hline
induced rank $\bar{d}$   & 10      & 200     & 385     & \textbf{467}              & 602     \\ \hline
nDCG@100 & 0.29723 & 0.39369 & 0.40045 & \textbf{0.40046} & 0.39925 \\ \hline
\end{tabular}%
\caption{netflix, matrix factorization with dropout, hyper parameter tuning by $nDCG@100$ on validation dataset and its induced rank .}
\label{tab:mf_drop_netflix}
\begin{tabular}{|c|c|c|c|c|c|}
\hline
p        & 0.9     & 0.99    & 0.996   & \textbf{0.997}   & 0.998   \\ \hline
induced rank $\bar{d}$   & 9       & 209     & 524     & \textbf{653}              & 883     \\ \hline
nDCG@100 & 0.26026 & 0.35462 & 0.36453 & \textbf{0.36495} & 0.36406 \\ \hline
\end{tabular}%
\caption{msd, matrix factorization with dropout, hyper parameter tuning by $nDCG@100$ on validation dataset and its induced rank .}
\label{tab:mf_drop_msd}
\begin{tabular}{|c|c|c|c|c|c|}
\hline
p        & 0.99    & 0.999   & 0.9995 & \textbf{0.9999}  & 0.99995 \\ \hline
induced rank $\bar{d}$   & 249     & 2054    & 3783   & \textbf{11380}            & 19308   \\ \hline
nDCG@100 & 0.18986 & 0.28532 & 0.307  & \textbf{0.32634} & 0.30995 \\ \hline
\end{tabular}%
\end{small}
\end{table}

\subsection{Resources}
 Our code are mainly implemented in Numpy 1.19, Pytorch 1.7.1 on CUDA 11.0. Our experiments are performed on nodes with two sockets, each containing a 24-core Intel(R) Xeon(R) Platinum 8268 CPU @ 2.90GHz and 4 GeForce RTX 3090 24GB memory GPU.

\end{document}